\newtheorem{theorem}{Theorem}
\newtheorem*{remark}{Remark}
\begin{document}

\title{A Tunable Despeckling Neural Network Stabilized via Diffusion Equation}

\author{Yi Ran, Zhichang Guo, Jia Li{*}, Yao Li, Martin Burger, Boying Wu
        % <-this % stops a space

\thanks{Yi Ran, Zhichang Guo, Jia Li, Yao Li, and Boying Wu are with the School of Mathematics, Harbin Institute of Technology, 150001 Harbin, China (e-mail: 21b912025@stu.hit.edu.cn; mathgzc@hit.edu.cn; jli@hit.edu.cn; yaoli0508@hit.edu.cn; mathwby@hit.edu.cn).}% <-this % stops a space
\thanks{Martin Burger is with Helmholtz Imaging, Deutsches Elektronen-Synchrotron DESY, 22607 Hamburg, Germany and Fachbereich Mathematik, Universit{\"a}t Hamburg, 20146 Hamburg, Germany (e-mail: martin.burger@desy.de).}
\thanks{* is the corresponding author.}
}

\maketitle

\begin{abstract}
    The removal of multiplicative Gamma noise is a critical research area in the application of synthetic aperture radar (SAR) imaging, where neural networks serve as a potent tool. 
	However, real-world data often diverges from theoretical models, exhibiting various disturbances, which makes the neural network less effective. 
	Adversarial attacks can be used as a criterion for judging the adaptability of neural networks to real data, since adversarial attacks can find the most extreme perturbations that make neural networks ineffective.
	In this work, the diffusion equation is designed as a regularization block to provide sufficient regularity to the whole neural network, due to its spontaneous dissipative nature. 
	We propose a tunable, regularized neural network framework that unrolls a shallow denoising neural network block and a diffusion regularity block into a single network for end-to-end training.	
	The linear heat equation, known for its inherent smoothness and low-pass filtering properties, is adopted as the diffusion regularization block.	
	In our model, a single time step hyperparameter governs the smoothness of the outputs and can be adjusted dynamically, significantly enhancing flexibility.	
	The stability and convergence of our model are theoretically proven. 
	Experimental results demonstrate that the proposed model effectively eliminates  high-frequency oscillations induced by adversarial attacks.	 
	Finally, the proposed model is benchmarked against several state-of-the-art denoising methods on simulated images, adversarial samples, and real SAR images, achieving superior performance in both quantitative and visual evaluations.	
\end{abstract}

\begin{IEEEkeywords}
Convolutional neural network, adversarial attack, synthetic aperture radar image despeckling, diffusion equation, multiplicative Gamma noise.
\end{IEEEkeywords}

\section{Introduction}

\IEEEPARstart{S}{ynthetic} aperture radar (SAR), an active remote sensing system, is an indispensable tool that can be applied to disaster monitoring \cite{krieger2010interferometric}, land cover classification \cite{zhang2018learning}, and object detection \cite{ma2013polarimetric}. 
However, SAR images are often contaminated by speckle noise due to scattering and coherence phenomena \cite{goodman1976some}.	
Therefore, denoising preprocessing of SAR images is crucial to further analysis or application.	
The acquisition process of SAR images can be modeled as \cite{goodman1976some}
\begin{equation}\label{noisemodel}
    f=u \eta
\end{equation}
where $u:\Omega \subset \mathbb{R}^2  \rightarrow \mathbb{R} $ represents the potential clean image, 
$f$ represents the observed image, and 
$\eta$ represents the multiplicative Gamma noise following the Gamma distribution with mean value equals to one. 
The probability density function related to the multiplicative Gamma noise is given by:
$$
P(\eta)=\frac{L^{L}}{\Gamma(L)} \eta^{L-1} e^{-L \eta} \mathbf{1}_{\{\eta \geqslant 0\}}
$$
where $\mathbf{1}_{\{\eta \geqslant 0\}}$ is the indicator function defined on $\{\eta \geqslant 0 \}$, and $L$ represents the number of looks. 
The current multiplicative denoising methods can be roughly divided into traditional methods and deep learning methods.

Early approaches primarily relied on spatial filtering techniques, such as the Lee filter \cite{lee1980digital}, Kuan filter \cite{kuan1985adaptive}, Frost filter \cite{frost1982model}, and Gamma maximum a posteriori (MAP) filter \cite{lopes1990maximum}.	
Subsequently, variational methods gained popularity due to their notable stability and computational efficiency.	
The AA model \cite{aubert2008variational} performed multiplicative denoising by minimizing a functional composed of a total variation (TV) regularizer and a fidelity term obtained by MAP. 
This model had significant influence, though its fidelity term becomes non-convex when $2f < u$. 
To address this issue, the globally convex SO model \cite{shi2008nonlinear} utilized a logarithmic transformation to convert multiplicative noise into additive noise, thereby facilitating its removal.	
An adaptive total variation (TV) model \cite{Hindawi} innovatively introduced the concept of  gray value indicator functions for the adaptive removal of multiplicative noise.	
Recently, the use of anisotropic diffusion equations for denoising had become increasingly prevalent among researchers.	A doubly degradation framework  incorporating a gray value indicator function had been proposed  in \cite{DD}, which used 
degenerate characteristics in the areas of zero gray values ($u= 0$) and edges ($\left|\nabla u\right|= \infty$) to control the diffusion speed and thus effectively remove noise. 
Numerous PDE-based models had been developed under this framework \cite{laghrib2024image, yao2019multiplicative, shan2019multiplicative}, yielding remarkable results.	The advantages of the above traditional methods are the theoretically completeness and robustness.

With the development of computational power, restoration effects of deep learning had significantly surpassed  traditional methods.	
One of the earliest attempts to utilize neural networks for the removal of multiplicative Gamma noise was SAR-CNN \cite{chierchia2017sar}.	
SAR-CNN employed logarithmic transformation and incorporated the DnCNN module \cite{zhang2017beyond} to perform multiplicative denoising.	
IDCNN was proposed in \cite{wang2017sar}, which divided noisy images by the estimated noise and adopted the structure of resdual networks to remove the multiplicative noise. 
However, these models are traditional neural networks, which are  difficult to interpret.	
To integrate both the power of neural networks and the high  interpretability of traditional methods, strategies like the Plug-and-Play and the Unrolling had been introduced.	
Models under the Plug-and-Play framework had been successfully applied to SAR denoising \cite{baraha2020sar,  mendes2024robustness, baraha2020plug, shen2022coupling}.	
Unlike the Plug-and-Play, the concept of unrolling suggests that traditional iterative algorithms can be unfolded into neural networks.	
SAR-RDCP employed the half-quadratic splitting method to handle an energy functional, with a neural network serving as a regularizer, and unrolled it into the network \cite{shen2020sar}.	
A similar strategy was utilized in \cite{chen2024deep} to develop a SAR image denoising network, integrating TV loss and Charbonnier loss functions\cite{kuan1985adaptive}.

Deep learning methods had achieved great success, 
but, natural disturbances, including rain, snow, and fog, are inevitable and directly impair the performance of trained neural networks \cite{mei2024comprehensive}.	
The reason for this phenomenon is the change of distribution of real SAR data compared to the simulated data, due to some small perturbations. 
Specifically, the most detrimental perturbations to neural networks are often identified via adversarial attack strategies, for which numerous methods have been developed \cite{goodfellow2014explaining, kurakin2018adversarial, ning2023evaluating}.	
Moreover, adversarial attacks exhibit a universal characteristic \cite{yan2022towards}: most general neural networks are remarkably vulnerable to such perturbations.
Consequently, a neural network with resistance to adversarial attacks exhibits enhanced adaptability to real-world data, which reminds us  that the   ability of a neural network to resist adversarial attacks can be used as a criterion for evaluating its ability to handle real SAR data.

High frequency oscillations are produced by neural networks when they suffer  adversarial attacks. 
The denoising-PGD attack algorithm \cite{ning2023evaluating}, specifically designed for image denoising, demonstrates remarkable transferability across neural networks is employed to generate adversarial samples that simulate extreme cases of real SAR data. 
To more clearly illustrate the impact of adversarial attacks on neural networks, AGSDNet \cite{thakur2022agsdnet} is used to process noisy image "House" and corresponding adversarial sample. The results are shown in Figure \ref{ill_adv_nadv_one_dim}. 
The core of dealing with adversarial samples is to eliminate the high frequency oscillations in the red ellipse in Figure  \ref{ill_adv_nadv_one_dim}.

\begin{figure}[htbp] 
    \centering  
    \includegraphics[width=0.48\textwidth]{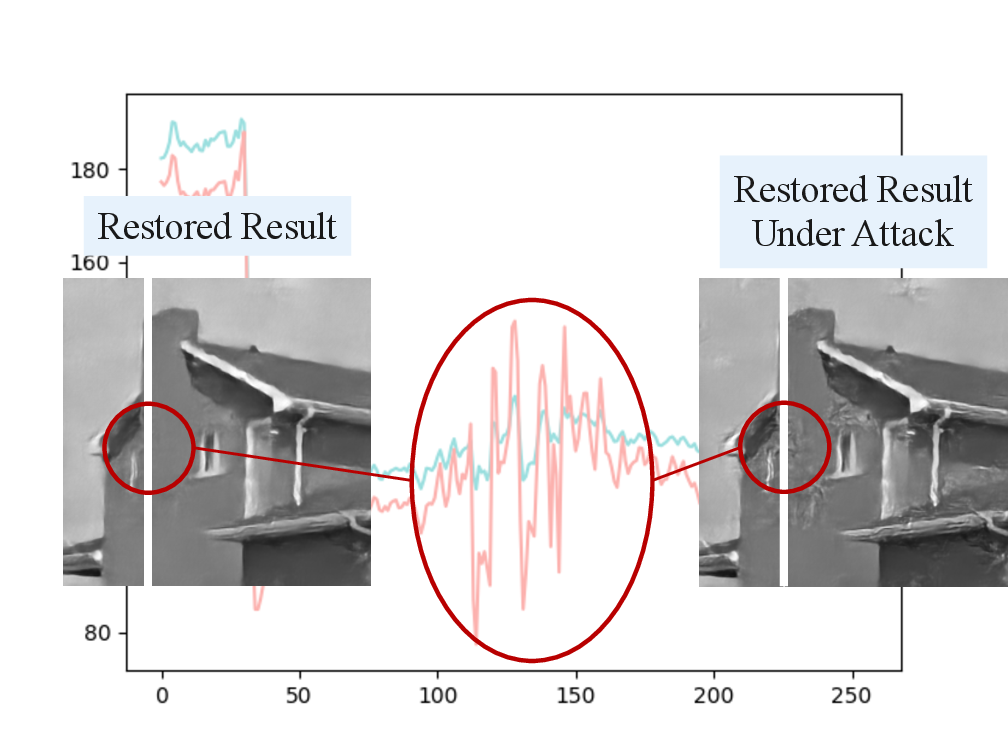}
    \caption{High frequency oscillations generated by adversarial attack. The 70th column of "House" is shown in the center. The restored results on noisy image and adversarial sample are placed in the left and right  respectively.}
    \label{ill_adv_nadv_one_dim}
\end{figure}

Stability or robustness is one of the basic conditions that a denoising algorithm needs to meet. 
However, the experiment in Figure \ref{ill_adv_nadv_one_dim} shows that the neural network does not have this characteristic. 
Improving the robustness of the network to better process real data is currently a research hotspot. 
A straightforward method is adversarial training, which improves network robustness by incorporating adversarial examples into the training process \cite{mkadry2017towards, shafahi2019adversarial}, such as Lipschitz learning \cite{oberman2018lipschitz, bungert2021clip}. 	 
However, computing the Lipschitz constant of a neural network is an NP-hard problem \cite{virmaux2018lipschitz}.	
A recent two-stage method, IP-NDE, incorporates neural network priors into the coefficients of diffusion equations to enhance robustness \cite{cheng2024diffusion}.	
The primary stabilizing mechanism in this two-stage approach resides within the diffusion equation, while its influence on the robustness of neural networks  remains indirect.	
Encouragingly, the success of \cite{cheng2024diffusion} inspires further investigation into utilizing the dissipative properties of diffusion equations to enhance  neural network stability.

The dissipation phenomenon characterized by diffusion equations occurs spontaneously, because it arises from thermodynamic principles. 
The spontaneity of the dissipative phenomenon accounts for its insensitivity to the noise distribution and its underlying mechanism, as illustrated in Figure \ref{dissp}.	
This universality underscores the inherent stability of diffusion equations, which motivated us to employ them as regularizers to enhance the robustness of neural networks.

In this paper, a tunable robust despeckling network is proposed, which outperforms other state-of-the-art methods in comparative experiments on non-adversarial and adversarial images. The diffusion equations are integrated into a network block to improve the robustness of the entire neural network.
The main strengths of our model are as follows:
\begin{enumerate}
	\item{No extra adversarial samples are needed to add to the training set compared to the adversarial training, which reduce the amount of calculation. }
	\item{Enhancing the robustness of our neural network by the regularity of diffusion equations,  even effectively mitigates the effects of adversarial attacks. }
	\item{Adjusting the only one parameter in our trained model can control the smoothness of the outputs to accommodate different degrees of disturbances from the ideal model  \eqref{noisemodel}.}
\end{enumerate}

The rest of this paper is organized as follows. Section \ref{sec:2} introduces the instability of neural networks under adversarial attacks and some properties of heat equation. In section \ref{sec:3}, the framework and the details of our model are presented firstly. Then, the convergence analysis and the numerical algorithm are also given in section \ref{sec:3}. The experiments on simulated images, adversarial samples and real SAR images are shown in section \ref{sec:exp} to validate the performance of ou model. Finally, our conclusion is provided in \ref{sec:5}.

\section{Related Work}\label{sec:2}

\subsection{Adversarial attack and network instability}
Adversarial attacks exploit minor perturbations in input data to significantly degrade the original performance of a network \cite{goodfellow2014explaining, szegedy2013intriguing}. 
In this paper, we use  adversarial attacks to test the robustness of the model to improve the adaptability of our model to real data. 

Recently, a specialized attack on the denoising algorithm, $L^2$-denoising-PGD method \cite{ning2023evaluating} minimized the negative $L^2$-norm between the output of the network  and the clean image to complete the attack. 	
This method exhibits strong transferability, that a single adversarial sample can compromise the denoising performance of multiple neural networks.
For a denoising network $\mathcal{W}_{\theta}(\cdot)$, suppose $f$ and $u$ are the noisy image and the clean image, respectively. 
The adversarial attack can be formulated as	

\begin{equation*}
	\underset{\varepsilon}{\operatorname{max}} \ 
	\mathcal{L}(\mathcal{W}_{\theta}(f+\varepsilon), u) 
	\quad \text { s.t. }\|\varepsilon\|_{2} \leqslant \epsilon
\end{equation*}
where $\mathcal{L}$ is the loss function and $\epsilon$ is the threshold of the perturbation. 
The process of generating adversarial samples under the $L^2$-denoising-PGD strategy is	
\begin{equation*}
	f^{t+1}=f^t+\alpha \operatorname{sign}\left(\nabla_f \mathcal{L}(\mathcal{W}_{\theta}(f), u)\right)
\end{equation*}
where $f^t$ is the adversarial sample at iteration $t$, $\alpha$ represents the iterating step and $\operatorname{sign}(\cdot)$ is the sign function. 
To maintain a sufficiently small perturbation, values are clipped to remain within the range $[-\epsilon, \epsilon]$.

\begin{figure*}[htbp]
	\centering
	\subfloat[$t=20$]{\includegraphics[width=0.5\columnwidth]{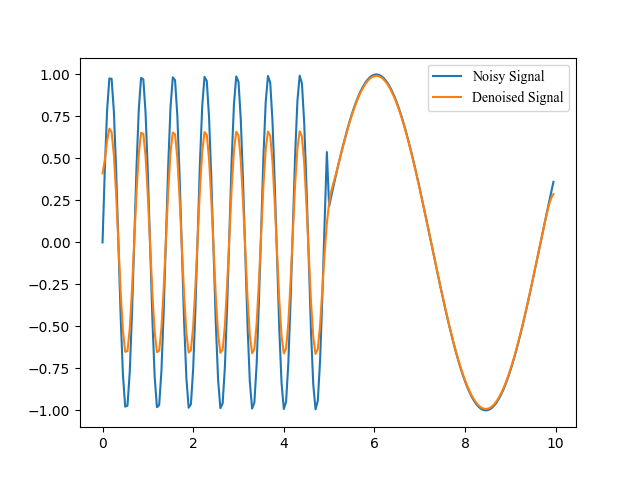}} %\hspace{5pt}
	\subfloat[$t=40$]{\includegraphics[width=0.5\columnwidth]{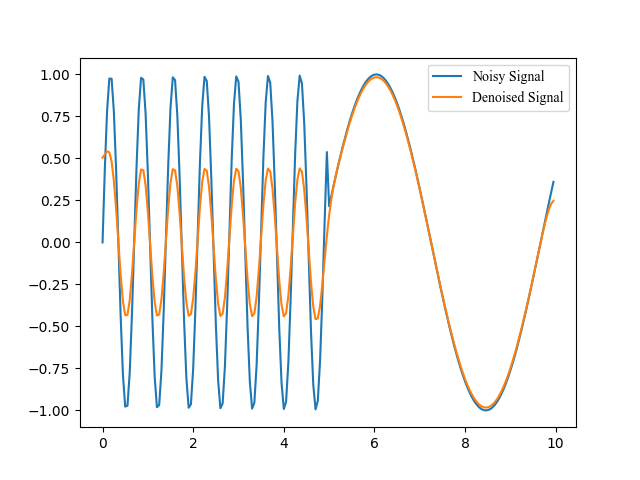}} %\hspace{5pt}
	\subfloat[$t=60$]{\includegraphics[width=0.5\columnwidth]{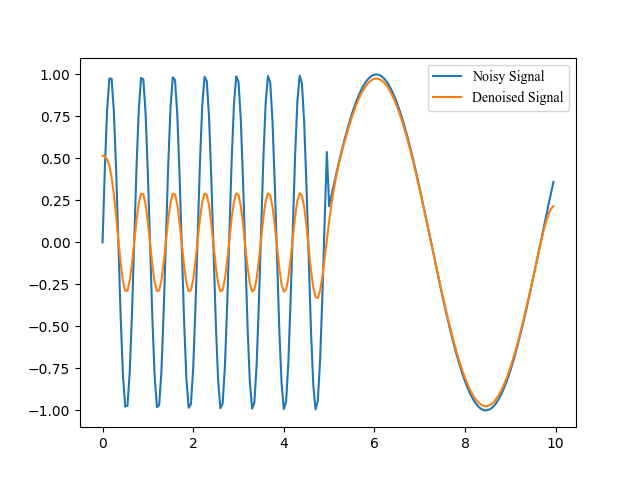}} %\hspace{5pt}
	\subfloat[$t=80$]{\includegraphics[width=0.5\columnwidth]{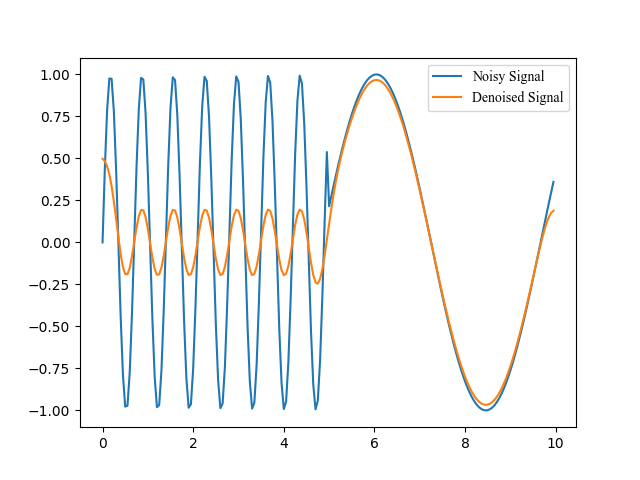}} %\hspace{5pt} 
	\caption{The results of the heat equation for different frequency signals as the time increases.}
	\label{itr_heat}
\end{figure*}

Adversarial behavior is inevitable as long as neural networks exist in the model. 
SAR-CNN \cite{chierchia2017sar} is one of the earliest pure data-driven despeckling neural networks and has competitive effect. 
SAR-RDCP \cite{shen2020sar} is a typical despeckling model that combines data-driven and model-driven strategies, which has high interpretability and performance. 
Although the performance of these two models both have declined when subjected to adversarial attacks, which can be observed in Figure \ref{adversarial_L=1}, 
the oscillations produced by SAR-RDCP are far less severe than those produced by SAR-CNN. 
This is because SAR-RDCP is not entirely a network, but has a hand-crafted fidelity term inside that provides some protection against attacks.

The denoising algorithm should be a stable process of eliminating high-frequency oscillations. 
From Figure \ref{ill_adv_nadv_one_dim}, neural networks are very sensitive to disturbances, that produce artificial effects and even amplify the disturbances. 
Besides, the adversarial attacks have little impact on traditional methods \cite{ning2023evaluating}, such as BM3D, variational methods and PDE methods. 
Therefore, the denoising network is actually unstable and this is why it is vulnerable to attack.

\subsection{Regularity of Heat equation}
The regularity of diffusion equations includes the smoothness and differentiability of solutions over time and space. 
In this study, we leverage the regularity of diffusion equations to mitigate adversarial attack impacts and enhance neural network robustness.

Here, we outline the solution properties of the heat equation \cite{evans2022partial}, which serve as a foundation for later analysis. 	
\begin{theorem}\label{fundamen_solu}
	Assume $f \in C(\mathbb{R}^n) \cap L^\infty(\mathbb{R}^n) $, then the solution of the following initial-value 
	\begin{equation*}
		\begin{cases}
			\frac{\partial u}{\partial t} = \Delta u \text{ in } \mathbb{R}^n \times (0,\infty) \\
			u(x,0)=f \text{ on } \mathbb{R}^n \times \{t=0\}
		\end{cases}
	\end{equation*}
	is
	\begin{equation}\label{heat_solution}
	u(x,t)=(\Phi*f)(x,t)=\frac{1}{(4\pi t)^{n/2}}\int_{\mathbb{R}^n }e^{-\frac{|x-y|^2}{4t}}f(y)\mathrm{d}y
	\end{equation}
	where 
	$\Phi(x, t)= \frac{1}{(4 \pi t)^{n / 2}} e^{-\frac{|x|^2}{4 t}} $ for $ x \in \mathbb{R}^n$ and $ t>0$ is called the fundamental solution and $u \in C^\infty(\mathbb{R}^n \times (0,\infty))$. 
\end{theorem}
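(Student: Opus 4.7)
The plan is to establish the theorem by three standard steps, transferring derivatives from $f$ onto the Gaussian kernel $\Phi$ under the convolution. First I would verify by direct calculation that $\Phi(x,t) = (4\pi t)^{-n/2} e^{-|x|^2/(4t)}$ itself satisfies the heat equation pointwise on $\mathbb{R}^n \times (0,\infty)$, i.e.\ $\partial_t \Phi = \Delta_x \Phi$. This is a routine computation but is the crucial identity that the rest of the argument leverages. Alongside this, I would record the normalization $\int_{\mathbb{R}^n} \Phi(x,t)\,dx = 1$ for every $t > 0$, obtained by the change of variables $z = (x-y)/\sqrt{4t}$ and the Gaussian integral.

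Second, I would establish smoothness of $u$ and the PDE. Since $f \in L^\infty(\mathbb{R}^n)$ and, for any compact set $K \subset \mathbb{R}^n \times (0,\infty)$, every derivative $\partial_t^k \partial_x^\alpha \Phi(x-y,t)$ decays like a Gaussian in $y$ uniformly in $(x,t) \in K$, differentiation under the integral sign is justified. This yields $u \in C^\infty(\mathbb{R}^n \times (0,\infty))$ and
\begin{equation*}
\partial_t u(x,t) - \Delta u(x,t) = \int_{\mathbb{R}^n} \bigl(\partial_t \Phi - \Delta_x \Phi\bigr)(x-y,t)\,f(y)\,dy = 0,
\end{equation*}
so $u$ solves the heat equation classically.

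Third, and this is the main obstacle, I would verify that $u(x,t) \to f(x_0)$ as $(x,t) \to (x_0,0^+)$ for every $x_0 \in \mathbb{R}^n$. Using the normalization of $\Phi$ one writes $u(x,t) - f(x_0) = \int_{\mathbb{R}^n} \Phi(x-y,t)\bigl(f(y)-f(x_0)\bigr)\,dy$ and splits the integral at $|y - x_0| = \delta$. Given $\varepsilon > 0$, continuity of $f$ at $x_0$ picks $\delta > 0$ so that $|f(y)-f(x_0)| < \varepsilon$ on the small-ball piece, and the integral of $\Phi$ there is bounded by $1$. On the exterior, boundedness of $f$ gives the uniform bound $2\|f\|_\infty$, and a change of variables shows that the Gaussian mass outside a ball of radius $\delta/2$ about the origin tends to $0$ as $t \to 0^+$; restricting to $|x-x_0| \le \delta/2$ then guarantees $|y-x| \ge \delta/2$ whenever $|y-x_0| \ge \delta$, so the tail integral is controlled uniformly in $x$. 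The delicacy here lies in handling the joint limit in $(x,t)$ rather than a fixed $x$, which is exactly what the choice of $\delta/2$ buffer accomplishes. Combining the three steps gives the representation formula and the claimed regularity.
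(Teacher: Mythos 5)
Your proposal is correct and is precisely the classical argument the paper defers to (it cites Evans, \emph{Partial Differential Equations}, rather than proving the theorem itself): verify that $\Phi$ solves the heat equation and integrates to one, differentiate under the integral using the uniform Gaussian decay on compact subsets of $\mathbb{R}^n \times (0,\infty)$ together with $f \in L^\infty$, and recover the initial data by splitting at $|y-x_0|=\delta$ with the $\delta/2$ buffer to handle the joint limit $(x,t)\to(x_0,0^+)$. No gaps; this matches the standard reference proof step for step.
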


The proof of Theorem \ref{fundamen_solu} can be found in \cite{evans2022partial}. When the noisy image $f$ satisfies the condition in Theorem \ref{fundamen_solu}, the solution of heat equation is infinity smoothness, which provides sufficient regularity for our model. It also shows that heat equation denoising is a stable algorithm. 

Low-pass property of \eqref{heat_solution} that attenuates high frequencies in a monotone way \cite{weickert1998anisotropic} is another important property that can be shown in Figure \ref{itr_heat}. From Figure \ref{itr_heat}, the heat equation prioritizes the high-frequency part of the signal, which will gradually be smoothed out as time increases.

\section{Methodology}\label{sec:3}

\subsection{Framework of our model}
Stability is one of the most important conditions that a denoising algorithm needs to satisfy. 
The powerful data-fitting ability of neural networks makes them very easy to overfit, which greatly affects the robustness of neural networks. 
The instability of neural networks makes them ineffective when the real SAR data does not completely match the simulated data. 

The adversarial attacks can examine the robustness of a neural network. 
The real data has more disturbances than the simulated data, such as rain and snow.
Specially, finding the most ineffective direction of neural networks and adding small perturbations is the core of adversarial attacks. 
In addition, the networks with stronger denoising effects on simulated data tend to be less robustness, which can be observed from Figure \ref{adversarial_L=1} in subsection \ref{subsec:result_adv}. 
Therefore, adversarial attack is an important method to detect the adaptability of neural networks to real data.

Experimental evidence in Figure \ref{ill_adv_nadv_one_dim} has shown that small perturbations can cause high-frequency oscillations in the results processed by neural  networks, which seriously affecting the denoising effect. 
However, the regularity of the heat equation solution, as established in Theorem \ref{fundamen_solu}, and its demonstrated low-pass filtering property in Figure \ref{itr_heat}, motivate our adoption of diffusion equations to enhance the robustness of neural networks.

In this paper, we propose a novel tunable neural network framework that integrates the high performance of neural networks with the inherent stability of diffusion equations, providing robustness against adversarial attacks by modulating the smoothness of outputs.	
Formulaically, we incorporate
a diffusion equation $\frac{\partial u}{\partial t} = \mathcal{H}(u,\nabla u)$  as a regularizer in the denoising network and unroll it to propose our framework. 	
For $k=0,1,\cdots, K$, formulating our model as
\begin{empheq}[left=\empheqlbrace]{align}
	&z^{k+1}=\mathcal{D}_\Theta\left(u^k\right) \label{AANN}\\
	&u^{k+1}=z^{k+1} + \tau   \mathcal{H}(u^{k+1},\nabla u^{k+1}) \label{AAlaplace}
	\end{empheq}
where $u^{0}=f$ is the noisy image, $\mathcal{D}_\Theta(\cdot)$ in \eqref{AANN} is a denoising neural network block, and \eqref{AAlaplace} is the implicit scheme of diffusion equation with initial value  $z^{k+1}$ and time step $\tau$. 

The unrolling algorithm is used to unfold the iterative process within a neural network, allowing end-to-end training with pairs of noisy images and their corresponding labels.	
The framework of our proposed model is illustrated in Figure \ref{frmae1}. 

\begin{figure*}[htbp] 
    \centering  
    \includegraphics[width=1\textwidth]{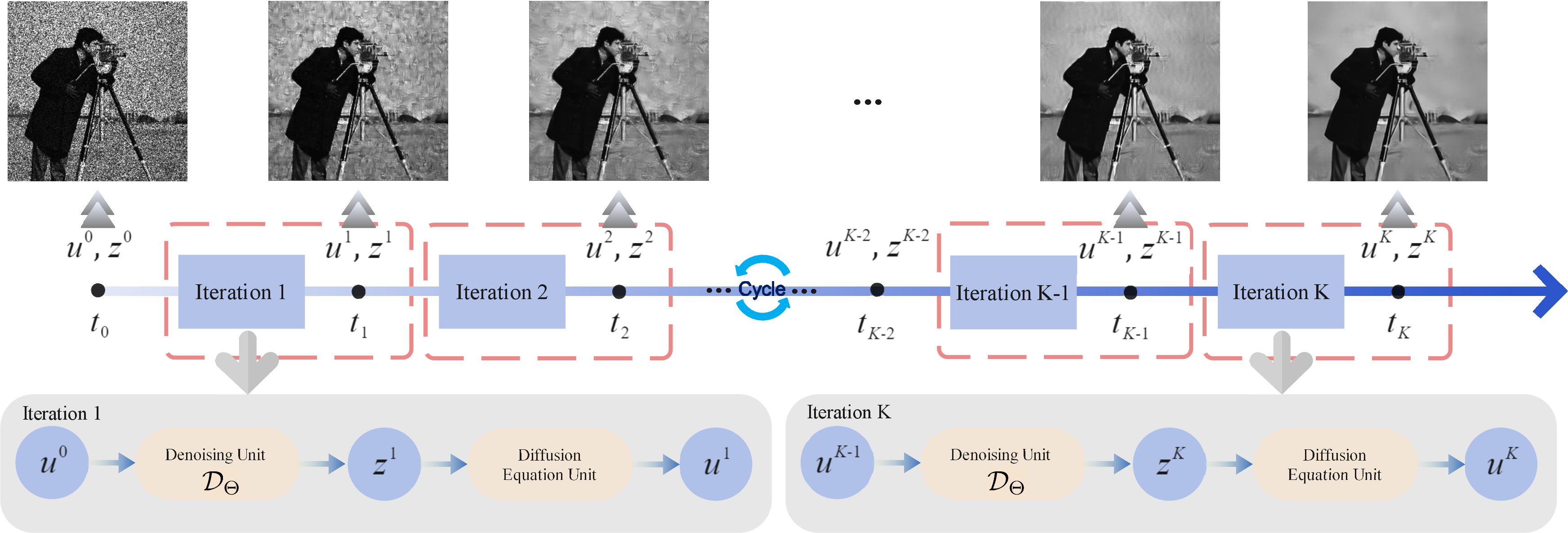}
    \caption{framework of our model.}
    \label{frmae1}
\end{figure*}

Notably, our model can be regarded as a fitting-correction system.	
The denoising neural network block $\mathcal{D}_\Theta(\cdot)$ exhibits strong fitting ability, while the diffusion regularity block \eqref{AAlaplace} enhances the smoothness of the denoised results, thereby preventing overfitting and improving stability.	
The core idea of our model is to smooth out the internal shocks caused by adversarial attacks in the network through the dissipation mechanism of the diffusion equation to improve the robustness.

The cumulative regularization effect introduced by the unrolled recursive architecture augments the flexibility of our model, effectively modulating the smoothness of outputs.	 
In our model, the time step $\tau$ can be adjusted at any stage, regardless of whether training has been completed. 
When facing adversarial attacks, a large time step is necessary to remove oscillations, but when the result is too smooth, it is better to reduce the step size to preserve details, as illustrated in Figure \ref{tau_impacet}.	
In some sense, one hyperparameter can control the smoothness of the output of the entire model.

The details about each components of our model are provided below.

\subsection{Diffusion regularity block}\label{subsec:difff}
The dissipation phenomenon of diffusion equations inspires us to design a regularization block. 
The general diffusion equations can be written as 
\begin{equation*}
	\frac{\partial u}{\partial t} =  \operatorname{div} D(u) \nabla u
\end{equation*}
where $D(u)$ is the diffusion coefficient which controls the diffusion velocity and direction. 
The gradients in the distribution of $u$ decrease, indicating a reduction in the disparities between high and low values of $u$.	
This reduction in gradients correlates with energy dissipation within the system.	
Importantly, the dissipative phenomenon occurs spontaneously and is independent of the noise distribution and its underlying mechanism.	
In other words, diffusion equations merely reduce the disparity between the maximum and minimum values in the input, irrespective of their cause.

It is widely accepted that nonlinear equations offer superior denoising and better edge preservation.	
However, the inherent nonlinearities of neural networks suffice to fully accomplish the denoising task.
Under this condition, the simplest case can better illustrate the effectiveness of our model in enhancing robustness. 
Furthermore, employing linear diffusion equations reduces computational complexity and simplifies theoretical analysis.	
Therefore, the most simplest linear diffusion equation 
\begin{equation*}
    \frac{\partial u}{\partial t} =  \Delta u
\end{equation*}
is selected as the regularizer, which is called  the heat equation. 
At that time, our model can be concretely described as

\begin{empheq}[left=\empheqlbrace]{align}
&z^{k+1}=\mathcal{D}_\Theta\left(u^k\right) \label{OurNN}\\
&u^{k+1}=z^{k+1} + \tau  \Delta u^{k+1} \label{Ourlaplace}
\end{empheq}
Other reasons for choosing the heat equation as the regularizer are explained below.

The sufficient regularity and the low-pass property of heat equation are two additional reasons for adopting it as regularizer. 
The low-pass property ensures that high-frequency oscillations generated by adversarial attacks within the network can be quickly eliminated and the infinite smoothness of the solution to the heat equation provides sufficient energy to improve the overall robustness of our neural  network.

\begin{figure}[htbp] 
    \centering  
    \includegraphics[width=0.45\textwidth]{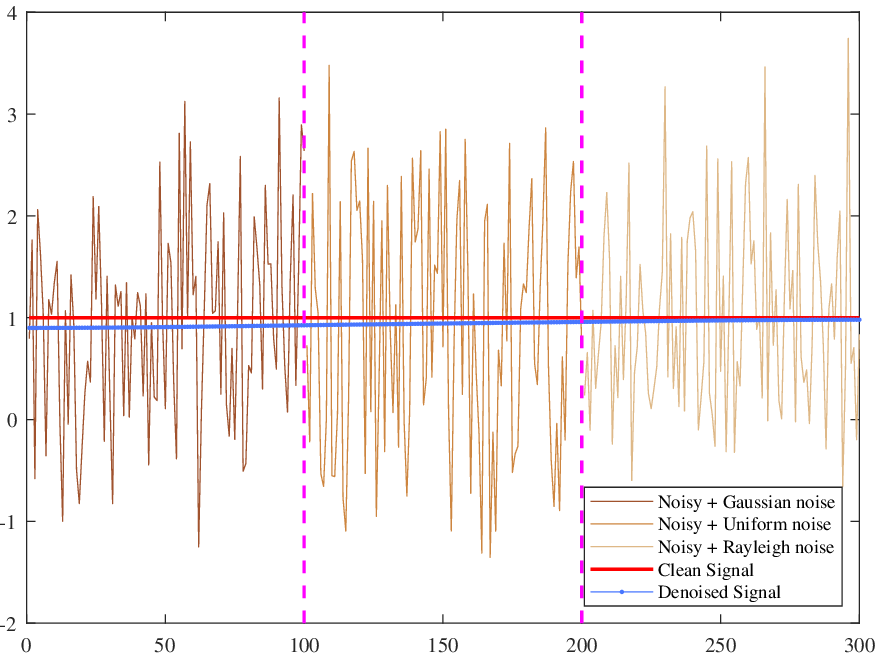}
    \caption{The performance of heat equation to different noise. The "Noisy" signals are constructed by multiplying Gamma noise with $L=10$ to the "Clean Signal". Adding "Gaussian noise",  "Uniform noise", and "Rayleigh noise" to 0-100, 100-200 and 200-300 respectively to simulate three different attacks.  The processed "Noisy" signal is "Denoised Signal".}
    \label{dissp}
\end{figure}

The insensitive to noise distribution of the  dissipative phenomenon can be used to handle a variety of disturbances, not just adversarial attacks. 
To explain more clearly, we introduce three distinct types of disturbances into the simulated SAR noisy signal and apply the heat equation for signal processing, as illustrated in Figure \ref{dissp}.	
Figure \ref{dissp} demonstrates that the heat equation effectively restores the signals (blue line), irrespective of the differing underlying causes of the disturbances.

The diffusion regularity block is equivalent to a convolutional layer with a fixed convolution kernel, whose parameters will not change during training. 
Because the heat equation solution $u(x,t)$ is the convolution of the fundamental solution $\Phi$ and the initial value $f$ as shown in \eqref{heat_solution}. 

The reason why we use implicit scheme in \eqref{Ourlaplace} to discrete heat equation is the unconditional stability of implicit scheme, which means that the time step can be changed arbitrary. 
The explicit scheme is the simplest to implement for discreting the PDEs, but its stability is constrained by the Courant-Friedrichs-Lewy (CFL) condition.	
Large time step is necessary when resisting severe adversarial attack, which is the limitation of explicit scheme. 

Consequently, from the theoretical perspective, the diffusion function block in \eqref{Ourlaplace} accumulatively mitigates distortions inside the networks, which  induced by adversarial attacks.

\subsection{Denoising neural network block}

In principle,  any denoising CNN can be selected as the denoising block in our model, but, the only one standard is that it is not too deep. The reasons are explained below. 

Shallow depths in each block are sufficient to accomplish effective denoising under the recursive unrolling architecture.	
Deeper neural networks yield better denoising effect, but the existence of the recursive structure introduces more calculations.

Our main goal is increasing the robustness of the neural network to  adapt to real data better.
Deeper layers induce more severe oscillations inside the network, necessitating larger time steps to ensure adequate regularization to counteract the oscillations induced by adversarial attacks. The increase in time step is often accompanied by the increase in the error of the discrete scheme of the diffusion equation. 

Specifically, transforming multiplicative noise into additive noise via a logarithmic transformation is a natural idea  
\begin{equation*}
    \log f=\log u+\log \eta
\end{equation*}
Then, a shallow DnCNN denoising module \cite{zhang2017beyond} is employed as our denoising block $\mathcal{D}_\Theta(\cdot)$ and its structure is shown in Figure \ref{frmae}.

For details, given a set of training samples ${f_i, y_i} (i=1,\cdots, N)$, where $f_i$ and $y_i$ represent noisy and clean images, respectively, 
the five layers DnCNN denoising module is used as the denoising block $\mathcal{D}_\Theta(\cdot)$. It  consists of five fully convolutional layers  without any pooling. 
The ﬁlter sizes are $3\times3\times64$, and  the input channel and the output channel are both set to 1. 
Since the DnCNN denoising module learns the noise distribution \cite{zhang2017beyond}, a residual connection is established between the first and last layers.	

\begin{figure*}[htbp] 
    \centering  
    \includegraphics[width=1\textwidth]{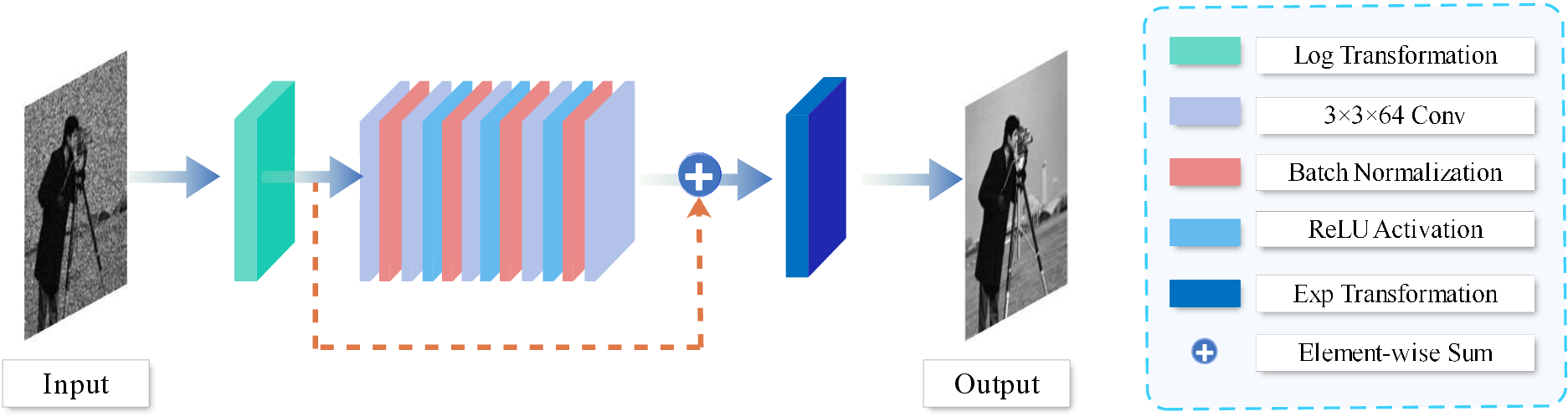}
    \caption{The structure of denoising block.}
    \label{frmae}
\end{figure*}

The entire network structure is obtained by arranging $\mathcal{D}_\Theta(\cdot)$ and heat equations five times in sequence, which is denoted as $\mathcal{W}_\Theta$. Our network can be trained end-to-end, and the loss function \cite{chierchia2017sar} is	
\begin{equation*}
    \mathcal{L} (\Theta)=\sum_{i=1}^N \mathbf{1}^{\top} \log \left(\cosh \left(\log {{y}_i}-\mathcal{W}_{\Theta}\left(\log {f}_i\right)+c\right)\right)
\end{equation*}
where $ c $ is the nonzero mean of log-speckle.

\subsection{Theoretical analysis}

In this section, the convergence and stability of our model can be given in Theorem \ref{conver} and Remark, respectively. 

\begin{theorem}[Convergence]\label{conver}
	Suppose $\Omega$ is a bounded open  domain with Lipschitz boundary $\partial \Omega$, which contains the image domain. 
	Besides, the heat equation satisfies the Dirichlet condition that $u=0$ on $\partial \Omega$. 
	Assume  the neural network $\mathcal{D}_\Theta\left(\cdot\right)$ is non-expandable: $$
	\|\mathcal{D}_\Theta\left(u_1 \right) - \mathcal{D}_\Theta\left(u_2 \right) \|\leqslant  \|u_1-u_2\|
	$$ 
	Then, the sequence $\{u^k\}_{k=0}^{K}$ generated by our algorithm is globally convergent.
	
\end{theorem}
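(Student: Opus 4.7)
The plan is to rewrite the two-line iteration as a single fixed-point iteration, and then show that the resulting operator is a strict contraction on $L^2(\Omega)$. First, I would substitute \eqref{OurNN} into \eqref{Ourlaplace} and rearrange to obtain
\begin{equation*}
(I - \tau \Delta)\, u^{k+1} = \mathcal{D}_\Theta(u^k), \qquad u^{k+1} = T\,\mathcal{D}_\Theta(u^k),
\end{equation*}
where $T := (I - \tau\Delta)^{-1}$ denotes the solution operator of the Dirichlet problem for the elliptic equation $v - \tau \Delta v = g$ on $\Omega$. The point is that convergence of $\{u^k\}$ reduces to showing that the composite operator $\Phi := T \circ \mathcal{D}_\Theta$ is a contraction on $L^2(\Omega)$.

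Next, I would analyze $T$ using the spectral theory of the Dirichlet Laplacian. Since $\Omega$ is bounded with Lipschitz boundary, $-\Delta$ with homogeneous Dirichlet boundary conditions is an unbounded self-adjoint positive operator on $L^2(\Omega)$ whose spectrum consists of eigenvalues $0 < \lambda_1 \leqslant \lambda_2 \leqslant \cdots \to \infty$, where $\lambda_1 > 0$ is strictly positive by the Poincar\'e inequality. Therefore $I - \tau\Delta$ has spectrum contained in $[1+\tau\lambda_1,\infty)$, so $T$ is well-defined, bounded, and self-adjoint with
\begin{equation*}
\|T\|_{L^2 \to L^2} \;=\; \frac{1}{1 + \tau\lambda_1} \;<\; 1.
\end{equation*}
This can also be obtained energetically: testing $v - \tau\Delta v = g$ against $v$ and using Poincar\'e yields $(1+\tau\lambda_1)\|v\|^2 \leqslant \|g\|\,\|v\|$.

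Combining this strict contractivity of $T$ with the assumed non-expansiveness of $\mathcal{D}_\Theta$ gives, for any $k \geqslant 1$,
\begin{equation*}
\|u^{k+1} - u^k\| = \|T\mathcal{D}_\Theta(u^k) - T\mathcal{D}_\Theta(u^{k-1})\| \leqslant \frac{1}{1+\tau\lambda_1}\,\|\mathcal{D}_\Theta(u^k) - \mathcal{D}_\Theta(u^{k-1})\| \leqslant \rho\,\|u^k - u^{k-1}\|,
\end{equation*}
with contraction factor $\rho := 1/(1+\tau\lambda_1) \in (0,1)$. A telescoping argument then gives $\|u^{k+m} - u^k\| \leqslant \rho^k (1-\rho)^{-1}\|u^1 - u^0\|$, so $\{u^k\}$ is Cauchy in $L^2(\Omega)$ and converges to the unique fixed point of $\Phi$ by the Banach fixed-point theorem. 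Global convergence (independence from the initialization $u^0 = f$) follows from the uniqueness of this fixed point.

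The main obstacle I anticipate is the functional-analytic setup rather than the contraction estimate itself: one must make precise in which space $\mathcal{D}_\Theta$ is non-expansive (presumably $L^2(\Omega)$), verify that its range lies in the domain where $T$ acts (any $L^2$ function is admissible as right-hand side, so this is mild), and handle the Dirichlet boundary condition consistently with a denoising network whose outputs need not vanish on $\partial\Omega$. In practice one either extends by zero outside the image domain or embeds $\Omega$ in a slightly larger domain, and one may want to record that the iterates inherit the regularity $H^2 \cap H^1_0$ from the elliptic regularity of $T$, which additionally justifies the pointwise interpretation of the scheme.
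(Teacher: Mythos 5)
Your proposal follows essentially the same route as the paper's proof: both reduce the two-line scheme to $u^{k+1}=(\mathcal{I}-\tau\Delta)^{-1}\mathcal{D}_\Theta(u^k)$, bound the operator norm of $(\mathcal{I}-\tau\Delta)^{-1}$ strictly below $1$ using the positive definiteness of the Dirichlet Laplacian, and compose with the non-expansiveness of $\mathcal{D}_\Theta$ to obtain a contraction. Your version is the more complete one, since you make the contraction constant explicit as $1/(1+\tau\lambda_1)$ via the Poincar\'e inequality and actually close the argument with the Banach fixed-point theorem, whereas the paper stops at the contraction inequality between two trajectories.
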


\begin{proof}
	The proposed recursive algorithm  can be rewritten as 
	\begin{equation}\label{my_alg}
	u^{k+1} = \mathcal{D}_\Theta (u^k) + \tau  \Delta u^{k+1}
	\end{equation}
	from $k$ step to $k+1$ step, which means that for $u^k \neq v^k$, it follows that 
	\begin{equation}
		(\mathcal{I}-\tau \Delta  )(u^{k+1} - v^{k+1}) = \mathcal{D}_\Theta(u^k) - \mathcal{D}_\Theta(u^k)
	\end{equation}
	where $\mathcal{I}$ is the identity operator.  
	In addition, $\| (\mathcal{I}-\tau \Delta  )^{-1} \| \leqslant \varepsilon$, for some  $0<\varepsilon< 1$ since  $-\Delta$ is positive definite. Combining the non-expandable of $\mathcal{D}_\Theta(\cdot)$, we have 
	\begin{equation*}
		\begin{aligned}
		\|u^{k+1} - v^{k+1}\| &\leqslant \| (\mathcal{I}-\tau \Delta  )^{-1} \| \|\mathcal{D}_\Theta(u^k) - \mathcal{D}_\Theta(u^k)\| \\
		&\leqslant \| (\mathcal{I}-\tau \Delta  )^{-1} \|  \|u^{k} - v^{k}\| \\
		& \leqslant \varepsilon \|u^{k} - v^{k}\|
		\end{aligned}
	\end{equation*}
\end{proof}

The stability analysis or the continuous dependence of solutions on initial values can be directly obtained from the Theorem \ref{conver}.  
\begin{remark}[Stability]\label{stab}
	If the neural network $\mathcal{D}_\Theta\left(\cdot\right)$ satisfies the conditions in Theorem \ref{conver}, our algorithm is stable.
\end{remark}

\subsection{Numerical scheme of diffusion block}

Compared with the explicit scheme, the implicit discrete scheme is more difficult to solve. Here, the Fourier transform is used to solve \eqref{Ourlaplace}	quickly. 

Assume that the periodic boundary condition is imposed on the heat equation.	
The shifting operators are denoted by $\mathcal{S}_x^\pm u(i,j)=u(i\pm 1,j)$ and $\mathcal{S}_y^\pm u(i,j)=u(i,j\pm 1)$.  
Consider the implicit scheme for the heat equation in two dimensions
\begin{equation}\label{fft}
	u^{k+1}=u^{k} + \tau( D_x^- D_x^+ u^{k+1} +D_y^- D_y^+ u^{k+1})
\end{equation}
where the $D_x^\pm $ and $D_y^\pm $ are the forward (backward) operators on the direction $x$  and $y$, respectively. 
Thus, they can be presented as $D_x^- D_x^+ = \mathcal{S}_x^- - 2 \mathcal{I} + \mathcal{S}_x^+ $ and $D_y^- D_y^+ = \mathcal{S}_y^- - 2 \mathcal{I} + \mathcal{S}_y^+ $, where $\mathcal{I}$ is the identity operator. 
For discrete frequencies $x_i$ and $y_i$, we have 
\cite{tai2011fast}:
\begin{equation*}
	\begin{aligned}
&\mathcal{F} \mathcal{S}_x^\pm u(x_i,y_j)=e^{\pm \sqrt{-1}z_i} \mathcal{F} u(x_i,y_j) \\
&\mathcal{F} \mathcal{S}_y^\pm u(x_i,y_j)=e^{\pm \sqrt{-1}z_j} \mathcal{F} u(x_i,y_j)
	\end{aligned}
\end{equation*}
where 
\begin{equation*}
	z_i=\frac{2\pi}{N_1} x_i, i=1,2,\cdots,N_1, \quad 
	z_j=\frac{2\pi}{N_2} y_j, j=1,2,\cdots,N_2 
\end{equation*}
Thus, the implicit scheme in \eqref{fft} can be rewritten as
\begin{equation}\label{alg_eq}
	(-2\tau(\operatorname{cos}(z_i)+\operatorname{cos}(z_j)-2)+1)\mathcal{F} u^{k+1}(x_i,y_j) = \mathcal{F} u^{k}(x_i,y_j)
\end{equation}
The discrete inverse Fourier transformation $\mathcal{F}^{-1}$ can be directly applied to the solution of \eqref{alg_eq} to obtain the updated $u^{k+1}$.	

The whole algorithm of our model is given in Algorithm \ref{alg:alg1}.

\begin{algorithm}[htbp]
	\caption{algorithm of our model}\label{alg:alg1}
	\begin{algorithmic}
	\STATE 
	\STATE {\textsc{TRAIN}}
	\STATE {Input: }$\text{noisy image } f \text{, clean image } y$
	\STATE {Initialization: }
	\STATE \hspace{0.5cm}{(1) Set parameters: $K$, $\tau$ and the parameters in $\mathcal{D}_\Theta\left(\cdot\right)$}
	\STATE \hspace{0.5cm}{(2) Initialize $u^{0}=f$}
	\STATE \hspace{0.5cm}{(3) Acquire input size $N_1$ and $N_2$}
	\STATE {for $k$ from 1 to $K$:}
	\STATE \hspace{0.5cm}$z^{k+1} \gets \mathcal{D}_\Theta\left(u^k\right)$
	\STATE \hspace{0.5cm}{for $x_i$ from 1 to $N_1$ and $y_i$ from 1 to $N_2$: }
	\STATE \hspace{1cm}{$u^{k+1}(x_i,y_j) \gets \mathcal{F}^{-1}\left( \frac{\mathcal{F} z^{k}(x_i,y_j)}{-2\tau(\operatorname{cos}(\frac{2\pi}{N_1} x_i)+\operatorname{cos}(\frac{2\pi}{N_1} y_i)-2)+1} \right)$}
	\STATE {return $u^{k+1}$}  
	\STATE {\textit{loss} $\gets \mathcal{L}(u^{k+1},y)$}  
	\STATE {\textit{loss.back()} to update the parameters in our model }
	\STATE 
	\STATE {\textsc{PREDICT}}
	\STATE {Input: }$\text{noisy image } f $
	\STATE {Initialization: }
	\STATE \hspace{0.5cm}{(1) Set a proper $\tau$ }
	\STATE \hspace{0.5cm}{(2) Initialize $u^{0}=f$}
	\STATE \hspace{0.5cm}{(3) Acquire input size $N_1$ and $N_2$}
	\STATE {for $k$ from 1 to $K$:}
	\STATE \hspace{0.5cm}$z^{k+1} \gets \mathcal{D}_\Theta\left(u^k\right)$
	\STATE \hspace{0.5cm}{for $x_i$ from 1 to $N_1$ and $y_i$ from 1 to $N_2$: }
	\STATE \hspace{1cm}{$u^{k+1}(x_i,y_j) \gets \mathcal{F}^{-1}\left( \frac{\mathcal{F} z^{k}(x_i,y_j)}{-2\tau(\operatorname{cos}(\frac{2\pi}{N_1} x_i)+\operatorname{cos}(\frac{2\pi}{N_1} y_i)-2)+1} \right)$}
	\STATE {return $u^{k+1}$}  
	\end{algorithmic}
\end{algorithm}

\subsection{Controllable smoothness}
To visually assess the effect of the time step on smoothness, an experiment is illustrated in Figure \ref{tau_impacet}.	
After training the network with a time step of $\tau=0.1$, and the unrolling depth $K=5$, an adversarial sample generated by SAR-CNN was selected as input.	
Increasing the time step from 0.06 to 0.20 leads to a significant increase in smoothness.	
As shown in Figure \ref{tau_impacet}, smaller time steps such as $\tau=0.08$ and $\tau=0.1$ retain more details but exhibit lower denoising effects.	
Larger time steps result in greater smoothness, with many details being smoothed out.	
This experiment verifies the feasibility of our strategy of controlling smoothness by controlling the time step.

\begin{figure}[htbp]
	\centering
	\subfloat[$t=0.06$]{\includegraphics[width=0.2\columnwidth]{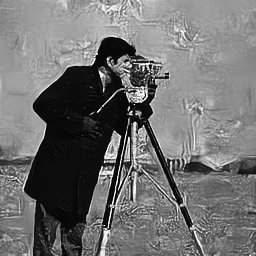}} \hspace{5pt}
	\subfloat[$t=0.08$]{\includegraphics[width=0.2\columnwidth]{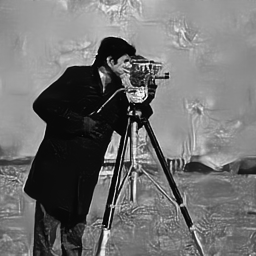}} \hspace{5pt}
	\subfloat[$t=0.10$]{\includegraphics[width=0.2\columnwidth]{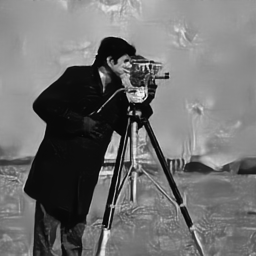}} \hspace{5pt}
	\subfloat[$t=0.12$]{\includegraphics[width=0.2\columnwidth]{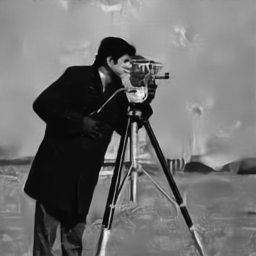}} \hspace{5pt} \\
	\subfloat[$t=0.14$]{\includegraphics[width=0.2\columnwidth]{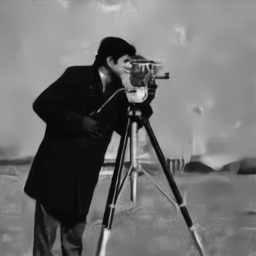}} \hspace{5pt}
	\subfloat[$t=0.16$]{\includegraphics[width=0.2\columnwidth]{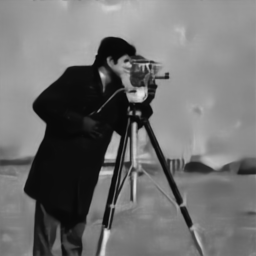}} \hspace{5pt}
	\subfloat[$t=0.18$]{\includegraphics[width=0.2\columnwidth]{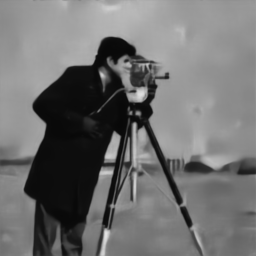}} \hspace{5pt}
	\subfloat[$t=0.20$]{\includegraphics[width=0.2\columnwidth]{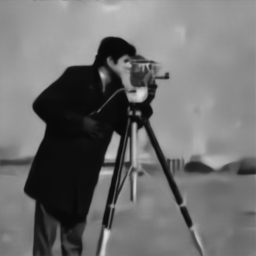}} 
	\caption{The results of our model with variance of $t$ for Cameraman.}
	\label{tau_impacet}
\end{figure}

Until now, the effectiveness of our model is explained from the modeling principle and theory. 
The experiments to verify the effectiveness of our model are presented in section \ref{sec:exp}.

\section{Experiment}\label{sec:exp}
\subsection{Data Description}
We randomly selected 400 images for training and 68 images for validation from the BSR-BSDS500 dataset.	
Noisy images were generated by adding multiplicative Gamma noise to both the training and validation sets after converting the color images to grayscale, with a noise level of $L=1, 4, 10$.	

The test data comprised two types: simulated images and real SAR images. For the simulated images, multiplicative Gamma noise with $L=1,4,10$ was added to three publicly available datasets: Set12 \cite{zeyde2012single}, McMaster \cite{zhang2011color}, FloodNet Dataset \cite{rahnemoonfar2020floodnet} and RESISC45 \cite{cheng2017remote}. 
It must be pointed out that 12 images and 14 images were picked randomly in the FloodNet Dataset and RESISC45 as the representatives of their data set, because their huge data volume. 
In addition, we selected the first ten images from the Set12 dataset as the test images, following the procedure outlined in \cite{chierchia2017sar}.
To prevent the emergence of idiosyncratic samples, noise was randomly added to each image 10 times. 
Three real SAR images were used for  evaluation of our method, as shown in Figure \ref{real-sar}. For simplicity, these images were denoted as SAR1 ($512\times512$), SAR2 ($512\times512$), and SAR3 ($512\times512$), which came from ICEYE \footnote{ https://www.iceye.com/downloads/datasets } and Gaofen-3 satellite \footnote{  https://github.com/AmberHen/WHU-OPT-SAR-dataset}.

\begin{figure}
	\centering
    \subfloat[SAR1]{
			\includegraphics[width=1in]{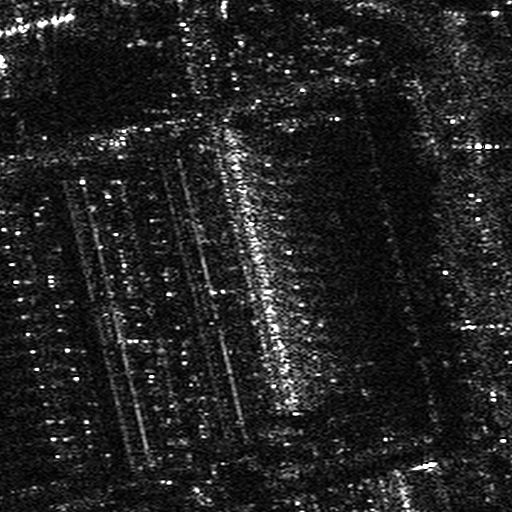}
            \label{SAR1}
	}%
	\subfloat[SAR2]{
			\includegraphics[width=1in]{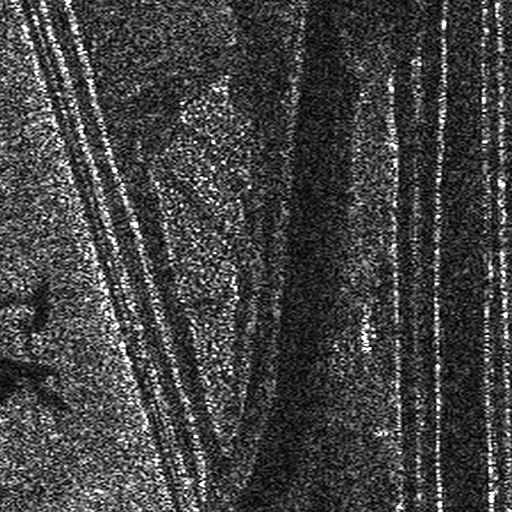}
            \label{SAR2}
	}%
	\subfloat[SAR3]{
			\includegraphics[width=1in]{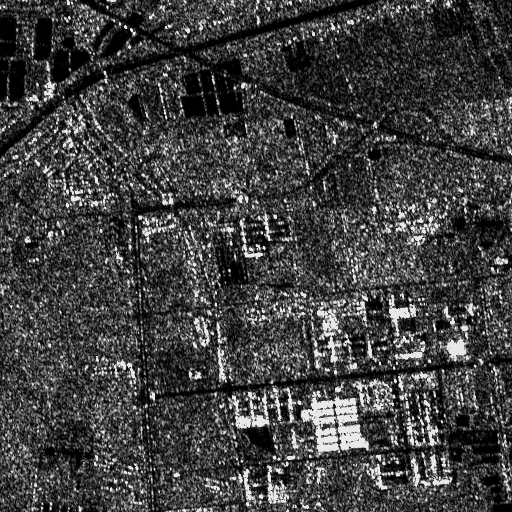}
            \label{SAR3}
	}%
	\centering
	\caption{Real SAR images.}
	\label{real-sar}
\end{figure}

\subsection{Experimental Setup and training}
In this study, our network utilized the Adam optimizer, with a minibatch size of 128 patches, and adopted the batch normalization technique described in \cite{ioffe2015batch}.	
The model was trained for 50 epochs, with learning rates 0.001 for the first 30 epochs and reduced to 0.0001 for the subsequent 20 epochs.	
Following data augmentation, the training set was composed of $2000 \times 128$ patches, with each patch including $40 \times 40$ pixels.

During training, the time step in the diffusion equation unit and the unrolling depth were set to $\tau = 0.1$ and $K=5$ respectively. After training, the time step can be adjusted freely, while the other network parameters remain fixed.	
Throughout the training and testing phases, all code was implemented by PyTorch framework.

\subsubsection{Evaluation Index}
This paper adopts two distinct evaluation strategies for simulated and real SAR images.	

For simulated images, evaluation metrics include the peak signal-to-noise ratio (PSNR) and the structural similarity image measurement (SSIM).	
Here, the clean image and denoised image are denoted as $y$ and $\hat{y}$ respectively. 
\begin{gather*}
    \operatorname{PSNR}=10 \log _{10} \frac{I J 255 * 255}{\left\|\hat{y}-y\right\|_{L^2}^2} \\
\mathrm{SSIM}=\frac{\left(2 \mu_{y} \mu_{\hat{y}}+c_1\right)\left(2 \sigma_{\hat{y} y}+c_2\right)}{\left(\mu_{\hat{y}}^2+\mu_{y}^2+c_1\right)\left(\sigma_{\hat{y}}^2+\sigma_{y}^2+c_2\right)}
\end{gather*}
where $I$ and $J$ are height and width of the image respectively; 
$\mu_{\hat{y}}$ and $\mu_{y}$ represent the mean of $\hat{y}$ and $y$; 
$\sigma_{\hat{y}}^2$ and $\sigma_{y}^2$ represent the variance of $\hat{y}$ and $y$; 
$\sigma_{\hat{y} y}$ is the
covariance between $\hat{y}$ and $y$; $c_1$ and $c_2$ are constants. 
Higher PSNR values indicate that the denoised image more closely resembles the original clean image.	
Additionally, SSIM effectively assesses edge preservation, where higher values correspond to greater edge recovery ability.	

For real SAR images, in the absence of ground-truth, the evaluation of denoising cleanliness is based on the equivalent number of looks (ENL) 
\begin{equation*}
	\mathrm{ENL}=\frac{\bar{\hat{y}}^2}{\sigma_{\hat{y}}^2}
\end{equation*}
where  $\bar{\hat{y}}^2$ and $\sigma_{\hat{y}}^2$ are the mean and the variance of the restored image  respectively. 
The edge preservation degree is determined by the ratio of the standard deviation $\sigma_{\hat{y}}^2$ to the mean intensity $\mu_{\hat{y}}$ (Cx) and the ratio of average (EPD-ROA) for horizontal direction (HD) and vertical direction (VD) 
\begin{equation*}
	\mathrm{Cx}=\frac{\sigma_{\hat{y}}}{\mu_{\hat{y}}}, \quad
	\text { EPD-ROA }=\frac{\sum_{i=1}^m\left|I_{D 1}(i) / I_{D 2}(i)\right|}{\sum_{i=1}^m\left|I_{O 1}(i) / I_{O 2}(i)\right|} 
\end{equation*}
where $I_{D 1}(i)$ and $I_{D 2}(i)$ are the adjacent pixels on the horizontal and vertical direction of denoised image. $I_{O 1}(i)$ and $I_{O 2}(i)$ are the adjacent pixels on the horizontal and vertical direction of noisy image. 
In computation, to avoid zero in the denominator, the gray value range of the noise image and the restored image is changed from $[0, 255]$ to $[1, 256]$.
In a homogeneous area, large (small) coefficient of ENL (Cx) represents great speckle removal performance. 
The closer of RPD-ROA value to 1, the more details  are protected.

\subsubsection{Comparison Method}
The experimental procedure is divided into three distinct phases: simulated image experiments, adversarial image experiments, and real SAR image tests.	
We compare the proposed model against one traditional method: MuLoG-BM3D \cite{deledalle2017mulog}, and four state-of-the-art neural network (NN)-based methods: SAR-CNN \cite{chierchia2017sar}, AGSDNet \cite{thakur2022agsdnet}, SAR-RDCP \cite{shen2020sar}, and TB-SAR\cite{perera2022transformer}.	
Among the NN-based methods, SAR-CNN represents the earliest application of deep learning for multiplicative noise removal.	
AGSDNet integrates image gradient information, channel attention, and spatial attention mechanisms into the network, resulting in enhanced performance.	
SAR-RDCP substitutes the traditional regularizer with a neural network in a variational model and unrolls its algorithm into the network, effectively enhancing despeckling.	
In contrast to the above CNN-based methods, TB-SAR is a denoising model specifically designed under the transformer framework.

\begin{table*}[htbp]
	\centering
	\caption{The comparison of PSNR and SSIM on simulate images}\label{simlua}
	
	\begin{tabular}{cccccccccccccc}
	\hline
	\multicolumn{1}{c}{\multirow{2}{*}{Dataset}}             & \multicolumn{1}{c}{\multirow{2}{*}{Looks}} & \multicolumn{2}{c}{MuLoG-BM3D}    & \multicolumn{2}{c}{SAR-CNN}  & \multicolumn{2}{c}{AGSDNet} & \multicolumn{2}{c}{SAR-RDCP} & \multicolumn{2}{c}{TB-SAR}   & \multicolumn{2}{c}{Ours}    \\ \cline{3-14} 
	\multicolumn{1}{c}{}                                     & \multicolumn{1}{c}{}                       & \multicolumn{1}{c}{PSNR}   & \multicolumn{1}{c}{SSIM} & \multicolumn{1}{c}{PSNR}     & \multicolumn{1}{c}{SSIM} & \multicolumn{1}{c}{PSNR}  & \multicolumn{1}{c}{SSIM} & \multicolumn{1}{c}{PSNR}  & \multicolumn{1}{c}{SSIM} & \multicolumn{1}{c}{PSNR} & \multicolumn{1}{c}{SSIM} & \multicolumn{1}{c}{PSNR}  & \multicolumn{1}{c}{SSIM} \\ 
	\hline
	\multirow{3}{*}{Set12} & L=1	&21.52	&0.75	&24.67	&0.71	&25.64	&0.74	&23.36	&0.68	&24.64	&0.71	&\textbf{25.78}	&\textbf{0.77}	\\
									
	&L=4&26.65&0.82&29.04&\textbf{0.85}&28.72&0.83 &28.44 &0.82 &27.03&0.77&\textbf{29.17}&\textbf{0.85}\\
													& L=10&29.02  
													&0.87&30.57&0.85&30.93&0.87 &30.57 &0.87 &28.43& 0.81&\textbf{31.15}&\textbf{0.89}\\
	\hline
	\multirow{3}{*}{McMaster} & L=1&22.60&0.71&25.59&0.76&26.16&0.77 &22.12 &0.70 &25.12&0.72&\textbf{26.33}&\textbf{0.80}\\
															   & L=4&26.77&0.80&29.72&0.88&29.40&0.87 &29.16 &0.86 &27.36&0.80&\textbf{29.84}&\textbf{0.89}\\
															   & L=10&29.11
															   &0.86&31.64&0.90&31.79& 0.91&31.50 & 0.91&28.85&0.83&\textbf{32.25}&\textbf{0.93}\\
	\hline										\multirow{3}{*}{FloodNet} & L=1&22.56&0.71&25.21& 0.61&27.05&0.66 &22.86 &0.61 &26.46&0.63&\textbf{27.14}&\textbf{0.69}\\
															   & L=4 &27.45&\textbf{0.79}&29.49&0.77&29.41&0.75 &29.28 &0.75 &28.07&0.69&\textbf{29.66}&0.78\\
															   & L=10&29.16&\textbf{0.84}&30.79&0.80&31.22& 0.81&31.12 &0.81 &29.15&0.73&\textbf{31.47}&0.83\\
	\hline										\multirow{3}{*}{NWPU} & L=1&21.26&0.64&23.70&0.67&24.41&0.70&21.28&0.65&23.59&0.65&\textbf{24.57}&\textbf{0.72}\\
															   & L=4 &25.41&0.74&27.61&\textbf{0.83}&27.35&0.81&27.09&0.81&25.82&0.74&\textbf{27.69}&\textbf{0.83}\\
															   & L=10&27.33&0.79&29.36&0.87&29.52&0.87&29.29&0.87&27.24&0.78&\textbf{29.84}&\textbf{0.88}\\
\hline
	\end{tabular}
	\end{table*}

\subsection{Results on simulated images}
In denoising experiments with simulated SAR images, both visual evaluation and quantitative comparison are essential.	
In this experiment,  all the test images are selected from Set12, McMaster,  FloodNet Dataset and RESISC45, and corrupted by multiplicative Gamma noise with noise level $L=1,4,10$. 

Tables \ref{simlua} present the average PSNR and SSIM for each test image, with the best values highlighted in bold.	
It is evident that the traditional method MuLoG-BM3D exhibits the lowest PSNR and SSIM in comparison to NN-based methods, further demonstrating the significant advantages of neural networks.	
As shown in Tables \ref{simlua}, our model provides superior detail preservation (SSIM) and enhanced despeckling capability (PSNR) compared to other methods, regardless of the noise level.	
SARCNN and AGSDNet exhibit comparable performance to our model when the noisy level $L=4$ and 10, but AGSDNet shows better denoising results when $L=1$. 
For the FloodNet dataset, the BM3D method shows higher SSIM when $L=4$ and 10, only slightly higher than our method.

From a visual perspective, the denoising capability of a model is generally performed on smooth backgrounds, preserve edges, and retain textures.	
To address these three aspects, we select representative images from the test sets to illustrate the key characteristics of our model. The all images are selected under the noise level $L=1$, which is the worst situation in real life to test the capability of models to cope with the most complex environments. 

The first two images selected in Figure \ref{large_area=1} share two common characteristics: uniform grayscale in large areas and high-contrast edges.	
In the restoration results of these two images, all methods demonstrate excellent preservation of strong (coarse) edges in the restored images.	
Furthermore, our model, AGSDNet, and MuLoG-BM3D effectively remove background noise, with our model and AGSDNet producing sharper edges in the restored results.	

The third and fourth images in Figure \ref{large_area=1} contain rich weak edge  informations, such as the camera tripods in the third image, and triangular building on the roof in the fourth image, these are overly smoothed by the MuLoG-BM3D, SAR-RDCP, and TBSAR methods.	
Additionally, for the  camera tripods in the third image, our model and SAR-CNN produce similar results, outperforming AGSDNet and other methods.	

The restoration of texture is shown in the last two images in Figure \ref{large_area=1}.
Our model achieves similar results to SAR-CNN on the last image, outperforming the other compared models.	
Our model, SAR-CNN and AGSDNet have similar restoration results on the pattern on the roof of church.

In summary, our model demonstrates superior performance on simulated images compared to the other models both on evaluation index and visual effect.

\begin{figure*}[htbp]
    \centering
    \begin{tabular}{@{}c@{~}c@{~}c@{~}c@{~}c@{~}c@{~}c@{}}

		\includegraphics[width=.11\textwidth]{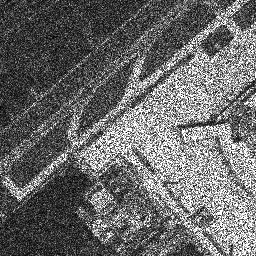} &
		\includegraphics[width=.11\textwidth]{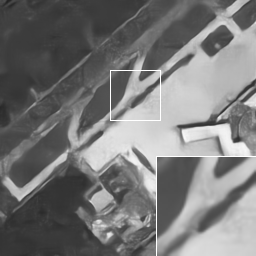} &
        \includegraphics[width=.11\textwidth]{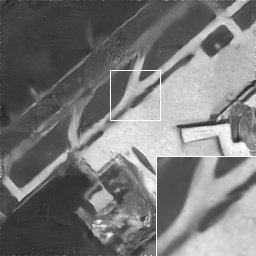} &
        \includegraphics[width=.11\textwidth]{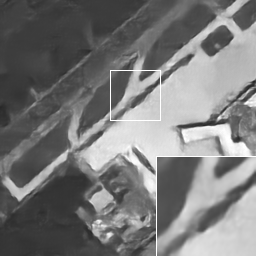} &
        \includegraphics[width=.11\textwidth]{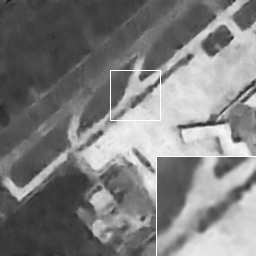} &
        \includegraphics[width=.11\textwidth]{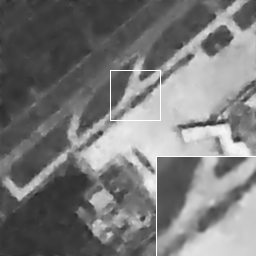} &
		\includegraphics[width=.11\textwidth]{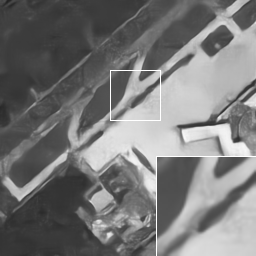} \\

		\includegraphics[width=.11\textwidth]{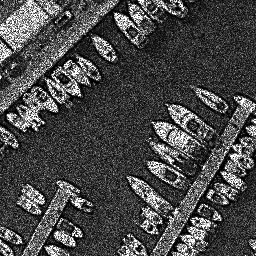} &
		\includegraphics[width=.11\textwidth]{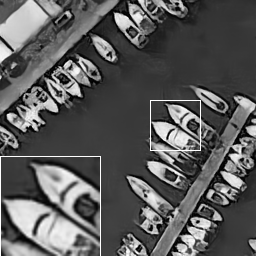} &
        \includegraphics[width=.11\textwidth]{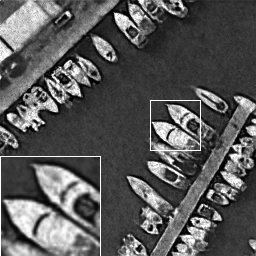} &
        \includegraphics[width=.11\textwidth]{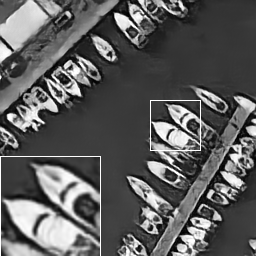} &
        \includegraphics[width=.11\textwidth]{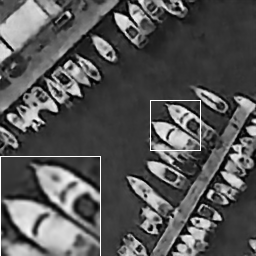} &
        \includegraphics[width=.11\textwidth]{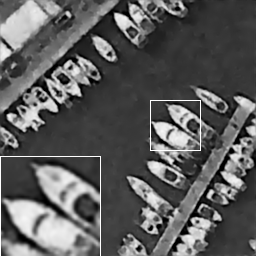} &
		\includegraphics[width=.11\textwidth]{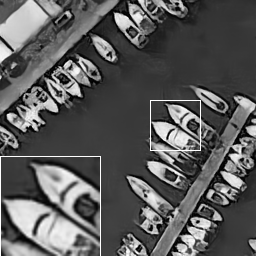} \\

        \includegraphics[width=.11\textwidth]{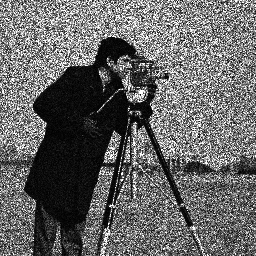} &
		\includegraphics[width=.11\textwidth]{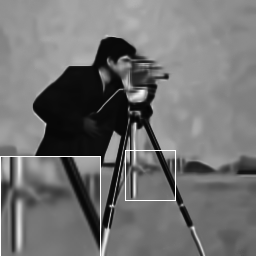} &
        \includegraphics[width=.11\textwidth]{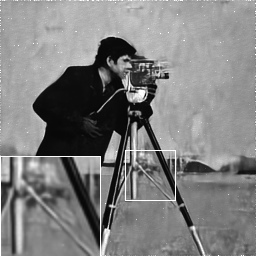} &
        \includegraphics[width=.11\textwidth]{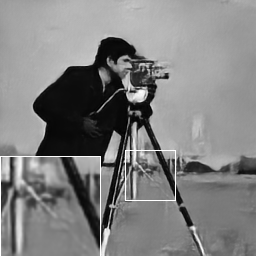} &
        \includegraphics[width=.11\textwidth]{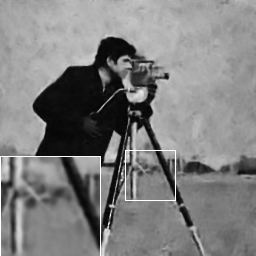} &
        \includegraphics[width=.11\textwidth]{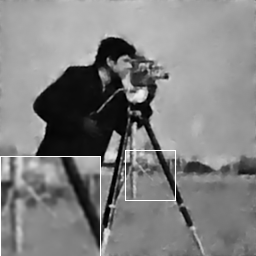} &
		\includegraphics[width=.11\textwidth]{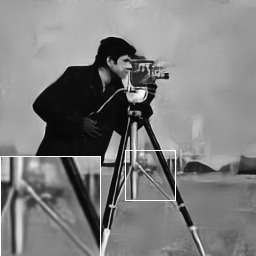} \\

		\includegraphics[width=.11\textwidth]{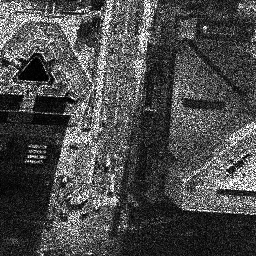} &
		\includegraphics[width=.11\textwidth]{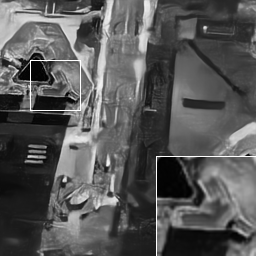} &
        \includegraphics[width=.11\textwidth]{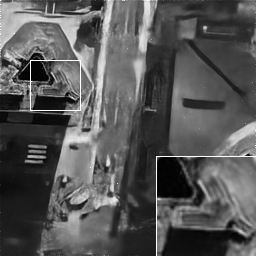} &
        \includegraphics[width=.11\textwidth]{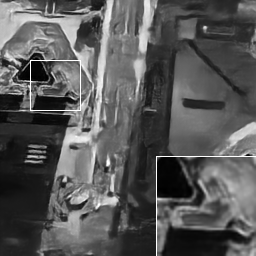} &
        \includegraphics[width=.11\textwidth]{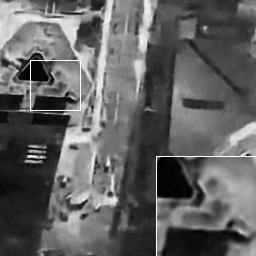} &
        \includegraphics[width=.11\textwidth]{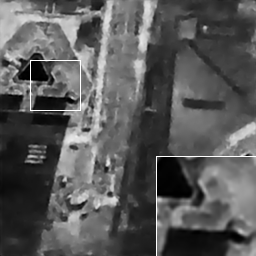} &
		\includegraphics[width=.11\textwidth]{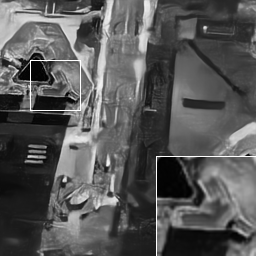} \\

		\includegraphics[width=.11\textwidth]{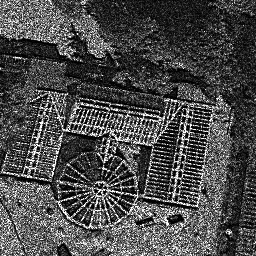} &
		\includegraphics[width=.11\textwidth]{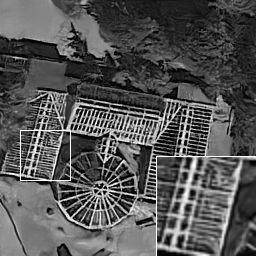} &
        \includegraphics[width=.11\textwidth]{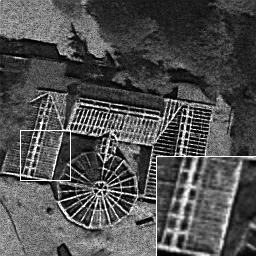} &
        \includegraphics[width=.11\textwidth]{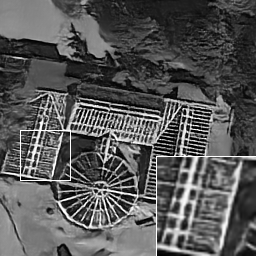} &
        \includegraphics[width=.11\textwidth]{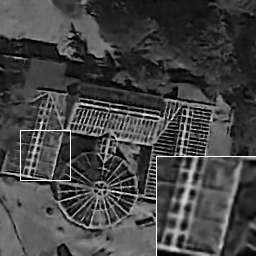} &
        \includegraphics[width=.11\textwidth]{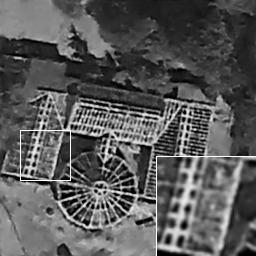} &
		\includegraphics[width=.11\textwidth]{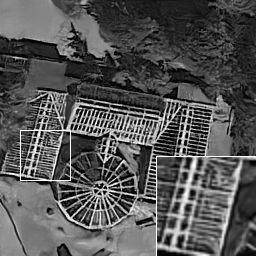} \\

		\includegraphics[width=.11\textwidth]{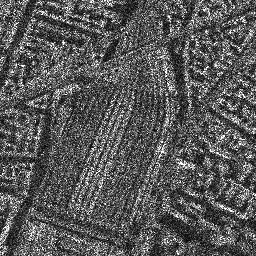} &
		\includegraphics[width=.11\textwidth]{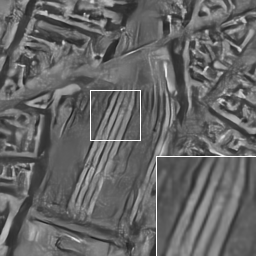} &
        \includegraphics[width=.11\textwidth]{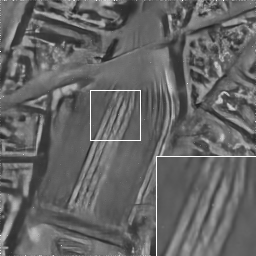} &
        \includegraphics[width=.11\textwidth]{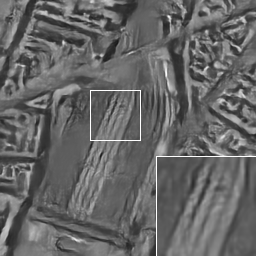} &
        \includegraphics[width=.11\textwidth]{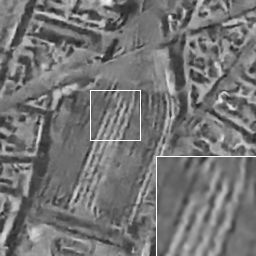} &
        \includegraphics[width=.11\textwidth]{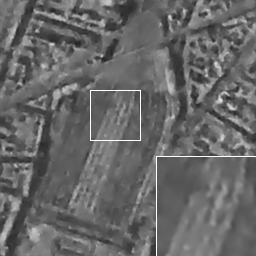} &
		\includegraphics[width=.11\textwidth]{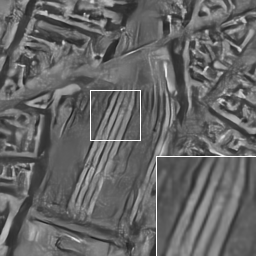} \\

		\footnotesize{Nosiy} & 
		\footnotesize{MuLoG-BM3D} & 
		\footnotesize{SAR-CNN}  & 
		\footnotesize{AGSDNet}  &
		\footnotesize{SAR-RDCP}  &
		\footnotesize{TB-SAR}  & 
		\footnotesize{Ours} \\

      \end{tabular}
      \caption{Restoration results for images with enlarged areas of the same grayscale with noise level $L = 1$.}
      \label{large_area=1}
\end{figure*}

\subsection{Results on adversarial simples }\label{subsec:result_adv}
Due to the instability of neural networks, their performance on real data often falls short compared to their performance on simulated datasets.	
In this section, adversarial attacks are employed to simulate severe perturbations between real-world and simulated data, serving as a benchmark to evaluate the robustness of the proposed network.

As mentioned earlier, the adversarial samples generated by denoising-PGD exhibit strong transferability across NN-based methods, allowing a single adversarial sample to effectively attack all such models \cite{ning2023evaluating}.	
In subsection \ref{subsec:difff}, the principle that our model can counter such adversarial attacks had been explained, and in this section, 
we verify its effectiveness through experimental evidence.	 
In this experiment, the images in Set12 are selected as the attack target, and  all adversarial samples were generated from a 10-layer SAR-CNN.	

Table \ref{Compare_set12_ADV} lists the  PSNR values of all the methods on the adversarial images. Because of the adding of new adversarial noise to the original noisy image, the decline of evaluation index is inevitable. From quantitative index, our model is better than ours, but not too much. 
The reason is that adversarial attacks turn details into high-frequency oscillations, which has already lost information and  the diffusion equation cannot generate the lost information.

The primary advantages of our model are reflected in its superior visual performance.	
Our model effectively suppresses high-frequency oscillations within the neural network, thereby enhancing restoration quality.	
Figure \ref{adversarial_L=1} illustrates the visual performance of various NN-based models on adversarial samples. 	
The first and second columns depict the adversarial samples and its differences with the original noisy images, respectively.	
The third to sixth columns show the results generated by different NN-based approaches.

As shown in Figure \ref{adversarial_L=1}, all restored images from the compared methods exhibit oscillations in the background and main objects, which should not occur.	
The results produced by SAR-CNN and AGSDNet display high-frequency oscillations in the background and detailed regions, negatively impacting object recognition.	
The results generated by TB-SAR have blurred edges, which reduces the overall restoration quality.	
Due to the cumulative regularization, the images restored by our model exhibit superior denoising effects and sharper edges compared to those restored by other models.	
Our model removes unnecessary high-frequency oscillations in edges and backgrounds while preserving image details. 

The adversarial experiments demonstrate the  stability of our model, providing strong evidence for its underlying principles and theoretical framework. Consequently, our model is well-suited for the complex real-world scenarios.

\begin{figure*}[htbp]
    \centering
    \begin{tabular}{@{}c@{~}c@{~}c@{~}c@{~}c@{~}c@{~}c@{}}
        \includegraphics[width=.11\textwidth]{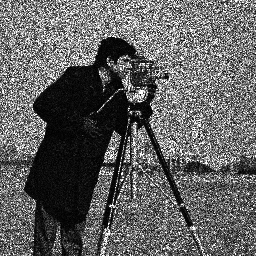} &
		\includegraphics[width=.11\textwidth]{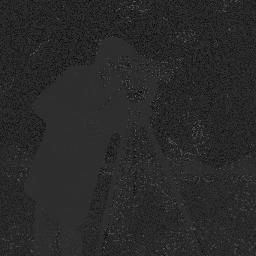} &
        \includegraphics[width=.11\textwidth]{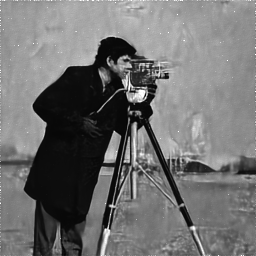} &
        \includegraphics[width=.11\textwidth]{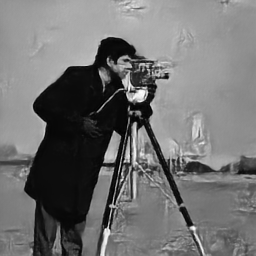} &
        \includegraphics[width=.11\textwidth]{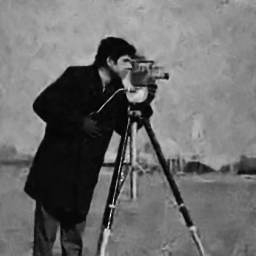} &
        \includegraphics[width=.11\textwidth]{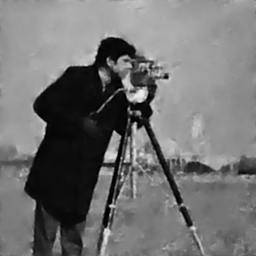} &
		\includegraphics[width=.11\textwidth]{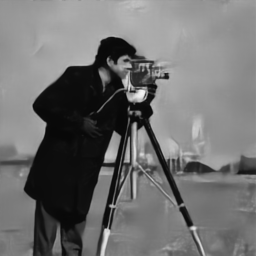} \\

		\includegraphics[width=.11\textwidth]{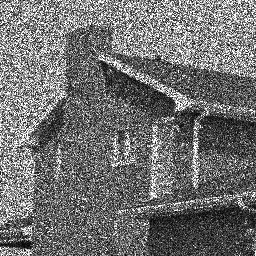} &
		\includegraphics[width=.11\textwidth]{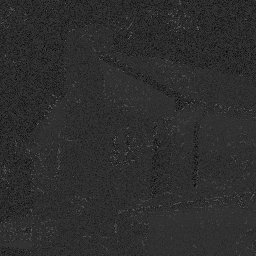} &
        \includegraphics[width=.11\textwidth]{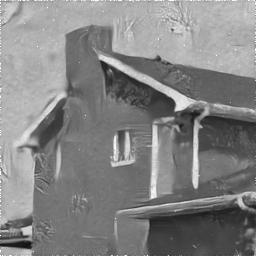} &
        \includegraphics[width=.11\textwidth]{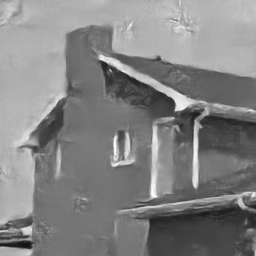} &
        \includegraphics[width=.11\textwidth]{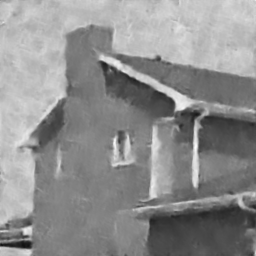} &
        \includegraphics[width=.11\textwidth]{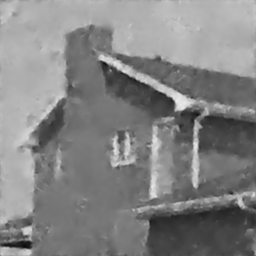} &
		\includegraphics[width=.11\textwidth]{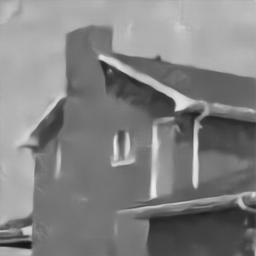} \\

		\includegraphics[width=.11\textwidth]{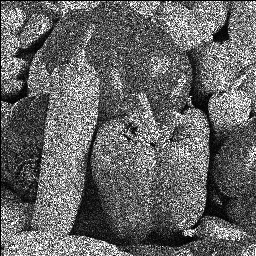} &
		\includegraphics[width=.11\textwidth]{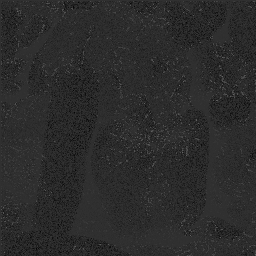} &
        \includegraphics[width=.11\textwidth]{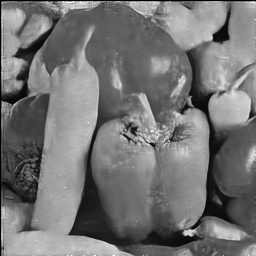} &
        \includegraphics[width=.11\textwidth]{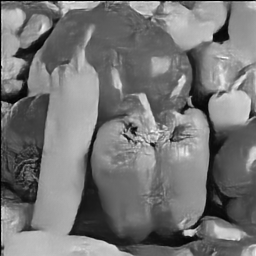} &
        \includegraphics[width=.11\textwidth]{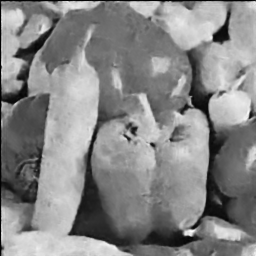} &
        \includegraphics[width=.11\textwidth]{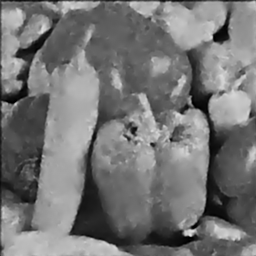} &
		\includegraphics[width=.11\textwidth]{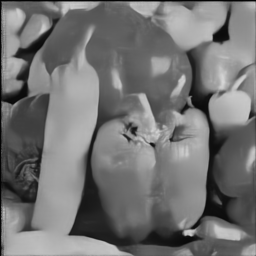} \\

		\includegraphics[width=.11\textwidth]{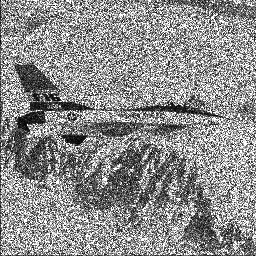} &
		\includegraphics[width=.11\textwidth]{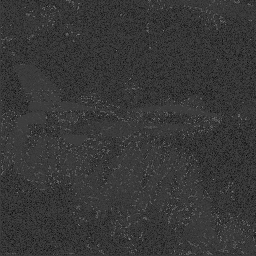} &
        \includegraphics[width=.11\textwidth]{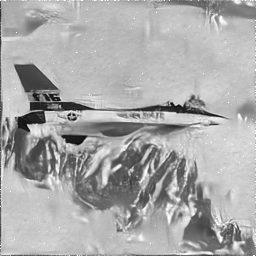} &
        \includegraphics[width=.11\textwidth]{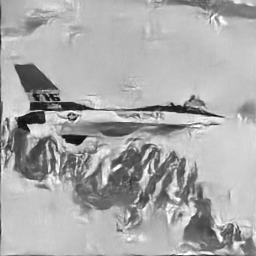} &
        \includegraphics[width=.11\textwidth]{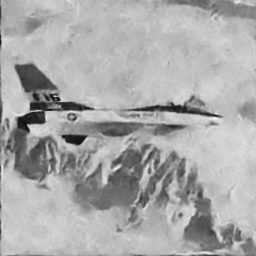} &
        \includegraphics[width=.11\textwidth]{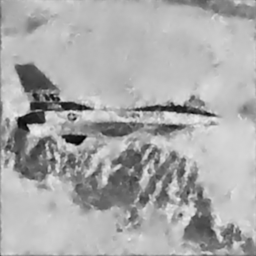} &
		\includegraphics[width=.11\textwidth]{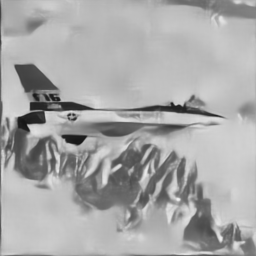} \\

		\footnotesize{Nosiy} & 
		\footnotesize{Delta} & 
		\footnotesize{SAR-CNN}  & 
		\footnotesize{AGSDNet}  &
		\footnotesize{SAR-RDCP}  &
		\footnotesize{TB-SAR}  & 
		\footnotesize{Ours} \\
		
      \end{tabular}
      \caption{Restoration results for adversarial images based on noisy images with $L=1$. }
      \label{adversarial_L=1}
\end{figure*}

\begin{table}[htbp] 
	\centering
	\caption{The PSNR on the adversarial samples generated by Set12 }\label{Compare_set12_ADV}
	\begin{tabular}{ccccccc}   
	\toprule
	\text{Dataset} & \text{SAR-CNN} & \text{AGSDNet} & SAR-RDCP & TB-SAR&Ours\\ 
	\midrule
	
	Set12 		&23.73	&24.15	&22.31		&23.85	&24.19\\ 

	 \bottomrule
	\end{tabular}
	\end{table}

\subsection{Results on real SAR images}
This section presents the performance comparison of all models on real SAR images.	
Three SAR images representing distinct scenarios are shown in Figure \ref{real-sar}. 	

Figure \ref{real-sar-image} compares the restoration results of our method with other competing models.	
We employ three distinct time steps, $\tau = 0.020$, 0.100, and 0.180, to highlight the adaptability of our model in controlling result smoothness, as illustrated in the final three images of Figure \ref{real-sar-image}.

In Figure \ref{real-sar-image}, the results given by  AGSDNet and SAR-RDCP are not satisfactory, since 
the results from them retain significant noise.	
Moreover, while SAR-CNN is more effective at removing noise compared to the above two models, it introduces white spots that significantly affect both the visual quality and the calculation of ENL and Cx.	
Mulog-BM3D, TB-SAR, and our model yield acceptable results in terms of balancing noise removal and detail preservation.

Compared to other approaches, our model achieves an optimal trade-off between noise removal and detail preservation, owing to its ability to adjust smoothness adaptively.	
Smaller time steps prioritize detail preservation, whereas larger time steps enhance noise reduction, as evidenced by the final three images in Figure \ref{real-sar-image}.	
Unlike traditional neural networks that the outputs are fixed, the flexibility of our model significantly enhances the controllability of the results.

Among all methods, our model demonstrates superior noise removal at $\tau \geqslant  0.1$ and performs comparably to Mulog-BM3D in detail preservation at $\tau = 0.02$.	
For example, in the bottom-right region of SAR3, Mulog-BM3D restores a white square with black lines.	
Our model effectively reconstructs the primary lines in this region, whereas TB-SAR overly smooths finer details.	
While Mulog-BM3D excels at detail preservation, its outputs exhibit undesirable ripples, notably on the left side of SAR2.

Table \ref{real-sar-img} summarizes the ENL, Cx and RPD-ROA metrics for the results of all methods on the three SAR images.	
In Table \ref{real-sar-img}, the best values are highlighted in bold.	
It is important to note that the images restored by SAR-CNN contain numerous unintended white specks, significantly distorting its performance metrics. As a result, the calculated ENL and Cx metrics for SAR-CNN are unreliable and provide limited informative value.	
Moreover, similar results were obtained by both TB-SAR and our method in terms of the denoising metrics ENL and Cx, which reflects the cleanliness of denoising, and also corresponds to the visual results in Figure \ref{real-sar-image}.	
The SAR-RDCP model achieves the highest EPD-ROA, indicating superior detail retention, followed by our model, which secures the second-highest rank.	
The accumulative regularization in our model ensures smooth outputs, which explains its slightly lower EPD-ROA performance compared to SAR-RDCP.

Considering both visual quality and evaluation metrics, our model effectively removes noise from real SAR images while maintaining excellent detail preservation.	
Most importantly, our model exhibits high flexibility compared to other traditional neural networks.

\begin{table}[]
		\centering
		\caption{The comparison of ENL and Cx on real SAR images}\label{real-sar-img}
	\begin{tabular}{cccccc}
		\hline
		\multicolumn{1}{c}{\multirow{2}{*}{Images}}             & \multicolumn{1}{c}{\multirow{2}{*}{Methods}} & \multicolumn{1}{c}{\multirow{2}{*}{ENL}} & \multicolumn{1}{c}{\multirow{2}{*}{Cx}}& \multicolumn{2}{c}{EPD-ROA}       \\ \cline{5-6} 
		\multicolumn{1}{c}{}                                     & \multicolumn{1}{c}{}                       				& \multicolumn{1}{c}{}                       				& \multicolumn{1}{c}{}                       & \multicolumn{1}{c}{HD}   & \multicolumn{1}{c}{VD}\\ 
		\hline
		\multirow{8}{*}{SAR1} &Mulog-BM3D&1.618&0.786&0.571&0.579 \\
		&SAR-CNN&0.001&29.760&0.640&0.646\\
		&AGSDNet&1.282&0.883&0.595&0.621\\
		&SAR-RDCP&1.524&0.809&\textbf{0.688}&\textbf{0.782}\\
		&TB-SAR&1.859&0.733&0.569&0.574\\
		&Ours-$\tau=0.020$&1.250&0.894&0.602&0.664\\
		&Ours-$\tau=0.100$&1.873&0.731&0.568&0.575\\
		&Ours-$\tau=0.180$&\textbf{2.281}&\textbf{0.662}&0.564&0.565\\
		\hline
		\multirow{8}{*}{SAR2} &Mulog-BM3D&4.545&0.469&0.604&0.594\\
		&SAR-CNN&0.001&63.351&0.720& 0.729 \\
		&AGSDNet&3.795&0.513&0.624&0.620 \\
		&SAR-RDCP&3.112&0.567&\textbf{0.740}&\textbf{0.751}\\
		&TB-SAR&6.436&0.394&0.595&0.591\\
		&Ours-$\tau=0.020$&4.805&0.456&0.618&0.602\\
		&Ours-$\tau=0.100$&6.467&0.393&0.598&0.590\\
		&Ours-$\tau=0.180$&\textbf{7.277}&\textbf{0.371}&0.595&0.589\\
		\hline
		\multirow{8}{*}{SAR3} &Mulog-BM3D&2.889&0.588&0.408&0.400\\
		&SAR-CNN&0.143&2.640&0.539&0.532 \\
		&AGSDNet&2.267&0.664&0.529& 0.481\\
		&SAR-RDCP&2.844&0.593&\textbf{0.899}& 0.697\\
		&TB-SAR&3.665&0.522&0.400& 0.422 \\
		&Ours-$\tau=0.020$&2.296&0.660&0.845& \textbf{0.720}\\
		&Ours-$\tau=0.100$&3.416&0.541&0.446&0.432\\
		&Ours-$\tau=0.180$&\textbf{3.899}&\textbf{0.506}&0.395&0.393\\
		\hline

	\end{tabular}
	\end{table}

\begin{figure*}[htbp]
    \centering
    \begin{tabular}{@{}c@{~}c@{~}c@{~}c@{~}c@{~}c@{~}c@{~}c@{~}c@{}}

        \includegraphics[width=.095\textwidth]{figures/real-sar/sar1/Noisy_ICEYE_GRD_SLEA_185384_20211126T051656.png} &
		\includegraphics[width=.095\textwidth]{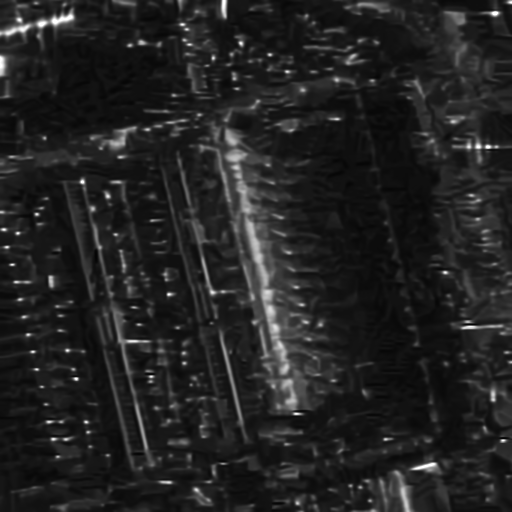} &
        \includegraphics[width=.095\textwidth]{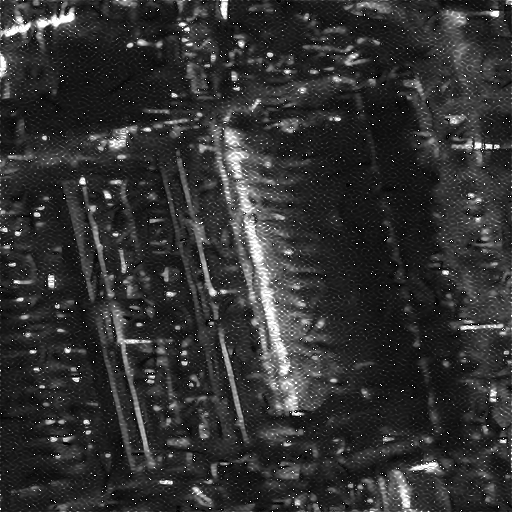}  &
        \includegraphics[width=.095\textwidth]{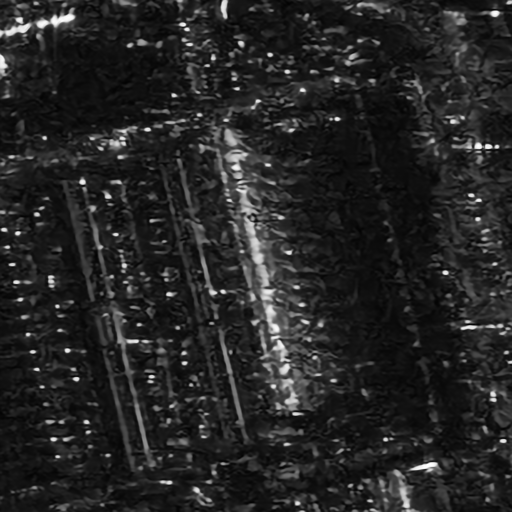}  &
        \includegraphics[width=.095\textwidth]{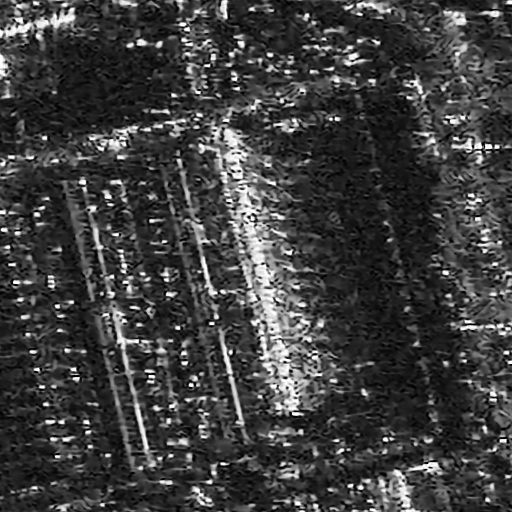} &
        \includegraphics[width=.095\textwidth]{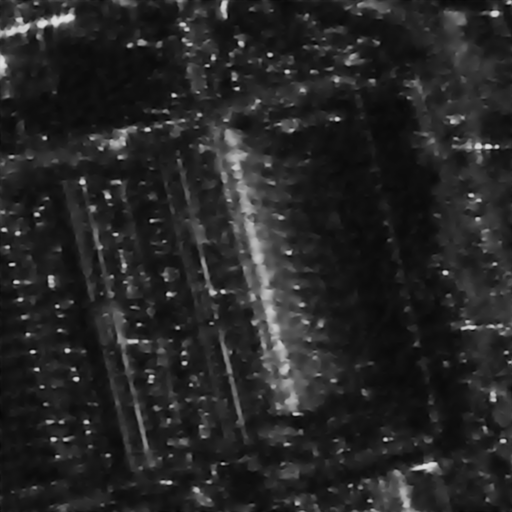} &
		\includegraphics[width=.095\textwidth]{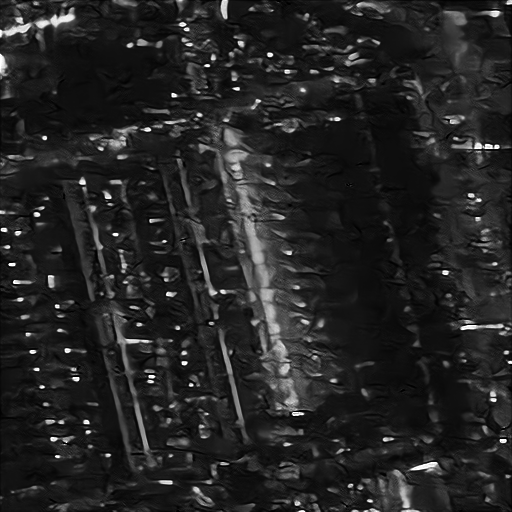}&
		\includegraphics[width=.095\textwidth]{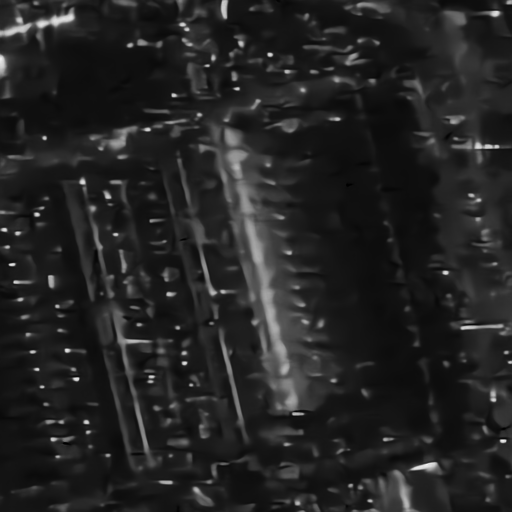}&
		\includegraphics[width=.095\textwidth]{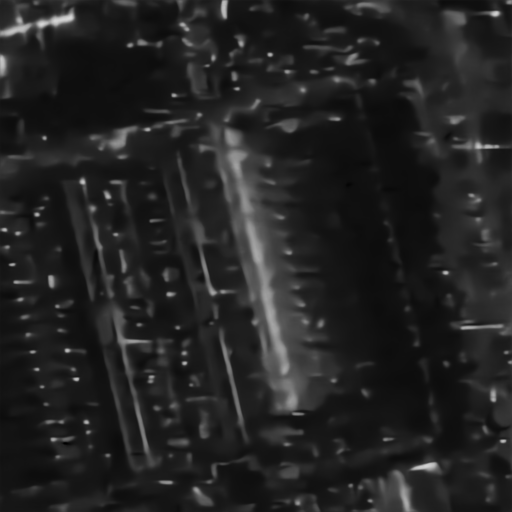}
		\\

		\includegraphics[width=.095\textwidth]{figures/real-sar/sar2/Noisy_ICEYE_GRD_SLEA_185982_20211124T170557.png} &
		\includegraphics[width=.095\textwidth]{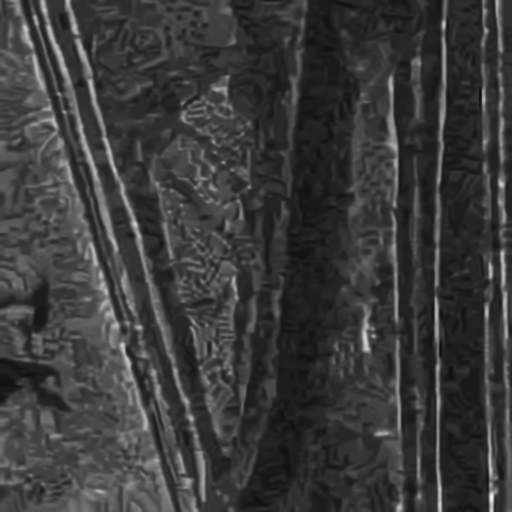} &
        \includegraphics[width=.095\textwidth]{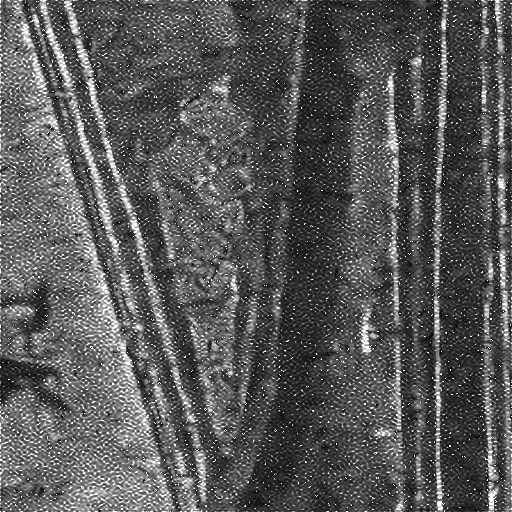}  &
        \includegraphics[width=.095\textwidth]{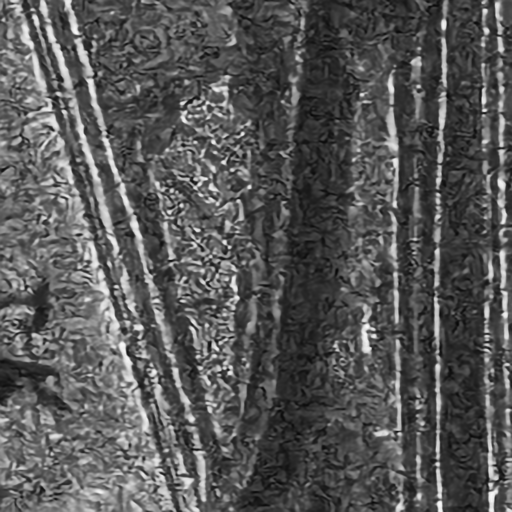}  &
        \includegraphics[width=.095\textwidth]{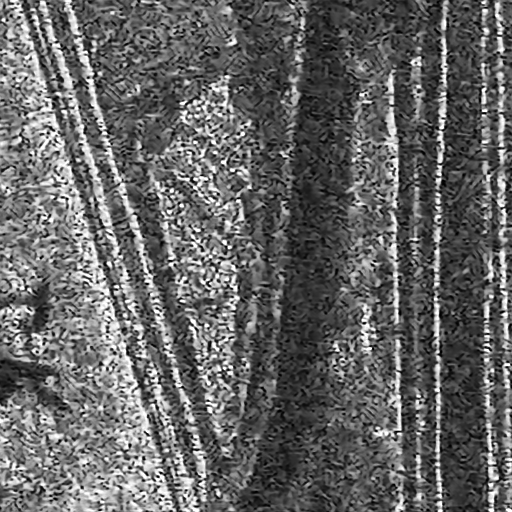} &
        \includegraphics[width=.095\textwidth]{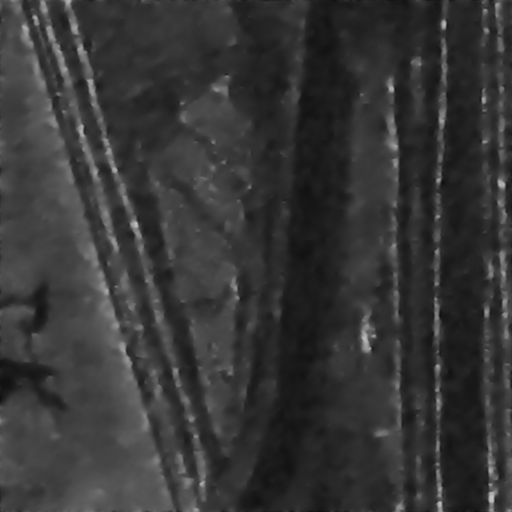} &
		\includegraphics[width=.095\textwidth]{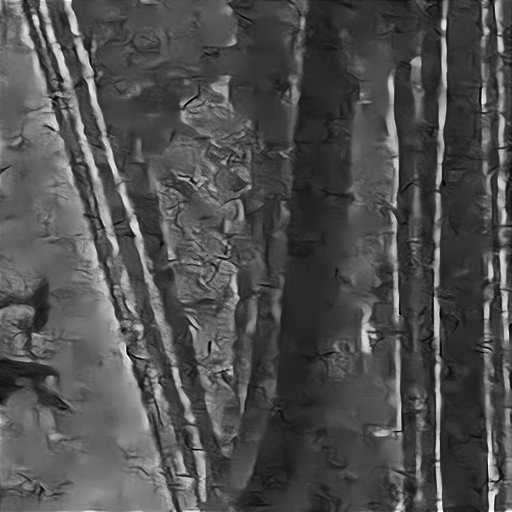}&
		\includegraphics[width=.095\textwidth]{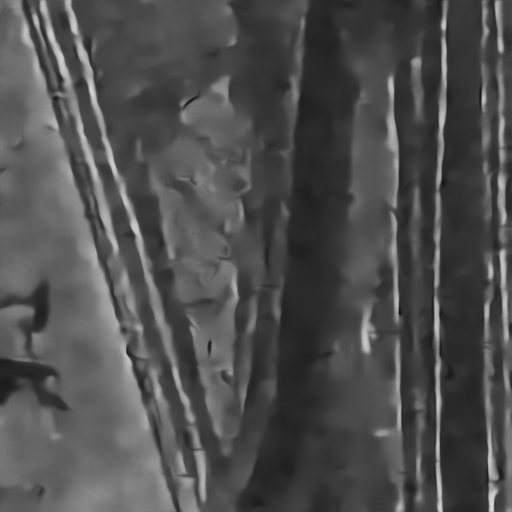}&
		\includegraphics[width=.095\textwidth]{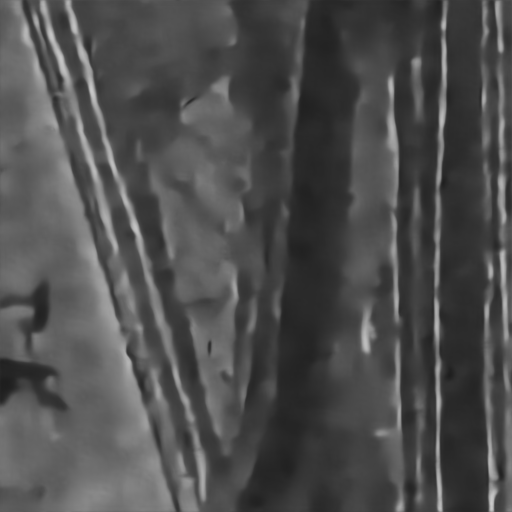}
		\\

		\includegraphics[width=.095\textwidth]{figures/real-sar/sar3/Noisy_NH49E008024.png} &
		\includegraphics[width=.095\textwidth]{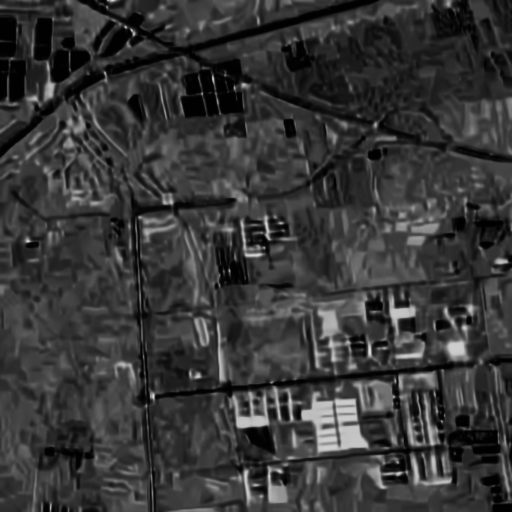} &
        \includegraphics[width=.095\textwidth]{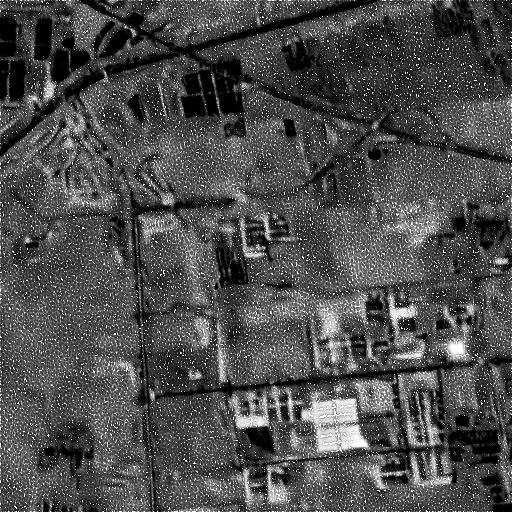}  &
        \includegraphics[width=.095\textwidth]{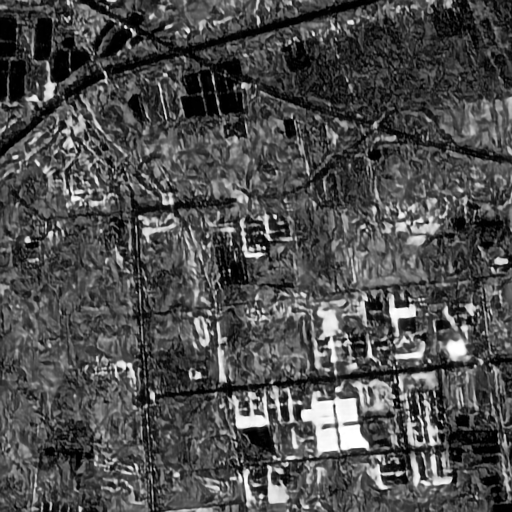}  &
        \includegraphics[width=.095\textwidth]{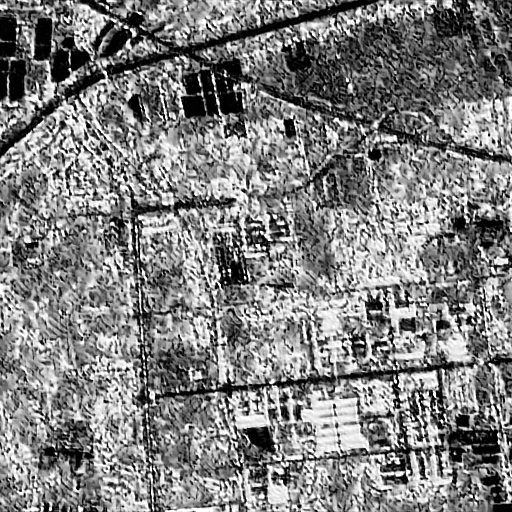} &
        \includegraphics[width=.095\textwidth]{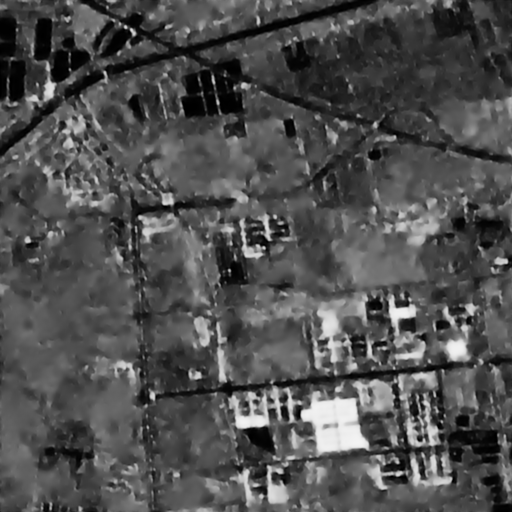} &
		\includegraphics[width=.095\textwidth]{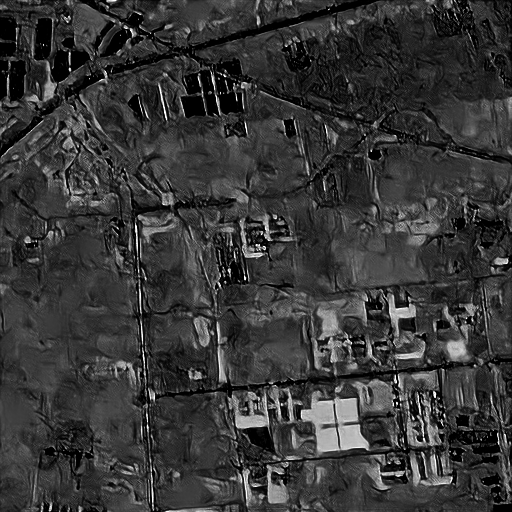}&
		\includegraphics[width=.095\textwidth]{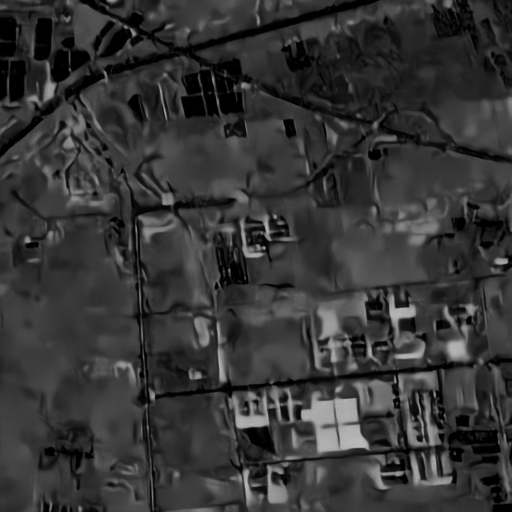}&
		\includegraphics[width=.095\textwidth]{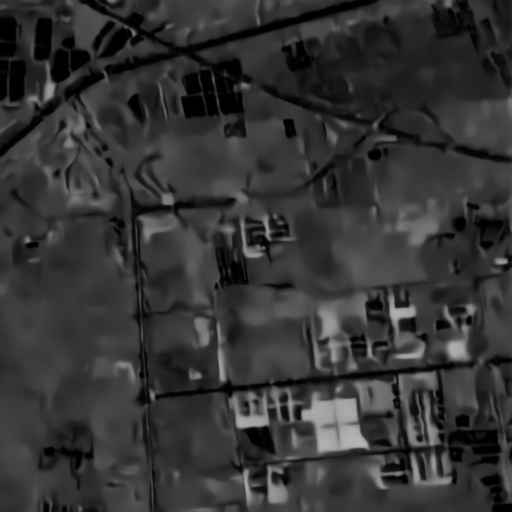}
		\\
		
		\footnotesize{Nosiy} & 
		\footnotesize{MuLoG-BM3D} & 
		\footnotesize{SAR-CNN}  & 
		\footnotesize{AGSDNet}  &
		\footnotesize{SAR-RDCP}  &
		\footnotesize{TB-SAR}  & 
		\footnotesize{Ours$^{0.020}$}& 
		\footnotesize{Ours$^{0.100}$}& 
		\footnotesize{Ours$^{0.180}$} \\
		
      \end{tabular}
      \caption{Restoration results for images with  texture with noise level $L = 1$.}
      \label{real-sar-image}
\end{figure*}

\section{Conclusion}\label{sec:5}
This study introduces a novel tunable multiplicative denoising network stabilized by diffusion equation.	
Inspired by the dissipative properties of diffusion equations, which facilitate noise-independent denoising, we integrate a linear heat equation within the network to enforce regularization.	
Leveraging the unrolling technique, cumulative regularization is achieved by unfolding our algorithm into a neural network.	
The flexibility of our deep neural network is significantly enhanced compared to traditional networks, enabling the adjustment of smoothness levels by tuning the time step $\tau$ after end-to-end training.	

After proving the stability and convergence of our model,  five other mainstream methods are selected for comparative experiment on simulated data, adversarial data, and real-world data. 
The superior denoising capability of our model, particularly in preserving background, weak edges, and texture, is shown in experiments. At the same time, the robustness of our neural network against adversarial attacks is also illustrated experimentally.	
Finally, our model exhibits superior visual performance when applied to real SAR images.

\section*{Acknowledgments}
The authors acknowledge the support from DESY (Hamburg, Germany), a member of the
Helmholtz Association HGF. 
This research was supported in part through the Maxwell computational
resources operated at Deutsches Elektronen-Synchrotron DESY, Hamburg,
Germany.

\bibliographystyle{IEEEtran}
\bibliography{UNreference}

\end{document}